\newcolumntype{R}[1]{>{\raggedleft\arraybackslash}p{#1}}
\theoremstyle{plain}
\newtheorem{theorem}{Theorem}[section]
\newtheorem{lemma}[theorem]{Lemma}
\newtheorem{corollary}[theorem]{Corollary}
\theoremstyle{definition}
\newtheorem{definition}{Definition}[section]
\newtheorem{assumption}[theorem]{Assumption}
\theoremstyle{remark}
\newtheorem{remark}[theorem]{Remark}
\def\eqref#1{equation~\ref{#1}}
\def\1{\bm{1}}
\DeclareMathAlphabet{\mathsfit}{\encodingdefault}{\sfdefault}{m}{sl}
\SetMathAlphabet{\mathsfit}{bold}{\encodingdefault}{\sfdefault}{bx}{n}
\def\grad{\nabla}
\def\bm{\mathbf{m}}
\def\cA{\mathcal{A}}
\def\cD{\mathcal{D}}
\def\cP{\mathcal{P}}
\def\cR{\mathcal{R}}
\def\cS{\mathcal{S}}
\def\smskip{\smallskip}
\def\texitem#1{\par\smskip\noindent\hangindent 25pt
               \hbox to 25pt {\hss #1 ~}\ignorespaces}
\def\norm#1{\|#1\|}
\newcommand{\BEAS}{\begin{eqnarray*}}
\newcommand{\EEAS}{\end{eqnarray*}}
\newcommand{\BEA}{\begin{eqnarray}}
\newcommand{\EEA}{\end{eqnarray}}
\newcommand{\BEQ}{\begin{eqnarray}}
\newcommand{\EEQ}{\end{eqnarray}}
\newcommand{\BIT}{\begin{itemize}}
\newcommand{\EIT}{\end{itemize}}
\newcommand{\BNUM}{\begin{enumerate}}
\newcommand{\ENUM}{\end{enumerate}}
\newcommand{\BA}{\begin{array}}
\newcommand{\EA}{\end{array}}
\newcommand{\ie}{{\it i.e.}}
\newcommand{\icql}{\texttt{ICQL}}
\newif\ifpagenumbering
\newsavebox{\theorembox}
\newsavebox{\lemmabox}
\newsavebox{\defnbox}
\newsavebox{\assbox}
\savebox{\theorembox}{\noindent\bf Theorem}
\savebox{\lemmabox}{\noindent\bf Lemma}
\savebox{\defnbox}{\noindent\bf Definition}
\crefname{assumption}{Assumption}{Assumptions}
\crefname{theorem}{Theorem}{Theorems}
\crefname{lemma}{Lemma}{Lemmas}
\def\xqs#1{\textcolor{black}{#1}}
\def\qs#1{\textcolor{black}{#1}}
\colorlet{yhclr}{orange!50!yellow}
\def \yh #1{\textcolor{yhclr}{#1}}
\def \yhrmv #1{\textcolor{yhclr}{\sout{#1}}}
\def \rebuttal #1{\textcolor{black}{#1}}
\title{In-Context Compositional Q-Learning for Offline  Reinforcement Learning}
\author{Qiushui Xu$^1$, Yuhao Huang$^2$, Yushu Jiang$^3$, Lei Song$^4$, Jinyu Wang$^4$, Wenliang Zheng$^1$, \\ 
\textbf{Jiang Bian}$^4$ \\
$^1$ Penn State University, $^2$ Nanjing University, $^3$ University of Toronto, $^4$ Microsoft Research \\
\texttt{\{qjx5019,wmz5132\}@psu.edu, huangyh@smail.nju.edu.cn,} \\
\texttt{barryy.jiang@mail.utoronto.ca,}\\
\texttt{\{Lei.Song,Wang.Jinyu,Jiang.Bian\}@microsoft.com} \\
}
\begin{document}

\maketitle

\begin{abstract}
Accurate estimation of the Q-function is a central challenge in offline reinforcement learning. However, existing approaches often rely on a shared global Q-function, which is inadequate for capturing the compositional structure of tasks that consist of diverse subtasks. We propose In-context Compositional Q-Learning (\texttt{ICQL}), an offline RL framework that formulates Q-learning as a contextual inference problem and uses linear Transformers to adaptively infer local Q-functions from retrieved transitions without explicit subtask labels. Theoretically, we show that, under two assumptions---linear approximability of the local Q-function and accurate inference of weights from retrieved context---\texttt{ICQL} achieves a bounded approximation error for the Q-function and enables near-optimal policy extraction. Empirically, \texttt{ICQL} substantially improves performance in offline settings, achieving gains of up to 16.4\% on kitchen tasks and up to 8.8\% and 6.3\% on MuJoCo and Adroit tasks, respectively. These results highlight the underexplored potential of in-context learning for robust and compositional value estimation and establish \texttt{ICQL} as a principled and effective framework for offline RL.
\end{abstract}

\section{Introduction}

Offline reinforcement learning (Offline RL) aims to learn effective policies from fixed datasets without further interaction with the environment~\citep{pmlr-v97-fujimoto19a,Lange2012}. This setting is especially important in real-world domains such as robotics~\citep{kalashnikov2018qtoptscalabledeepreinforcement}, logistics~\citep{pmlr-v139-wang21v}, and operations research~\citep{hubbs2020orgymreinforcementlearninglibrary,MAZYAVKINA2021105400}, where environment access is limited, data collection is expensive or risky, and historical data is often the only available resource. A central challenge is distributional shift: when a learned policy queries state-action pairs outside the dataset support, value extrapolation can cause severe overestimation and degenerate performance.~\citep{fu2020d4rl,kumar2020conservative}


Contemporary methods primarily employ policy constraints~\citep{chen2021decision} or value regularization~\citep{kumar2020conservative,kostrikov2022offline} to address this challenge. However, policy-constraint methods are largely limited by the behavior policies used to collect the offline data and exhibit a trade-off between generalization and safe adherence to the constraint. Recent value-regularization methods aim to provide conservative estimates that impose softer penalties on out-of-distribution actions. Nevertheless, the optimality of the learned value function is not guaranteed when the static dataset is limited and potentially biased.


We observe that, in many RL control tasks, the state space can often be naturally divided into multiple subtasks. Although an expressive action-value function may in principle capture state-action values accurately, the knowledge learned from one subtask may not be fully transferable to another. For example, in MuJoCo locomotion tasks, knowledge for increasing walking speed may not help with recovery from unexpected non-nominal states. A visualization of this phenomenon is provided in~\Cref{fig:4tasks}, which shows the distribution of states after dimensionality reduction, with colors indicating their actual future returns in the offline dataset.
Moreover, although states in the dataset can often be grouped into coherent clusters, each of which typically corresponds to a specific subtask, two clusters that are geometrically similar may nevertheless correspond to semantically different behaviors and exhibit distinct long-horizon returns.
When offline data are insufficient and exploration is unavailable, this property is not naturally captured by an offline value-learning algorithm that fits a shared global value function.


\begin{figure*}[!t]
  \centering
  \includegraphics[width=\linewidth]{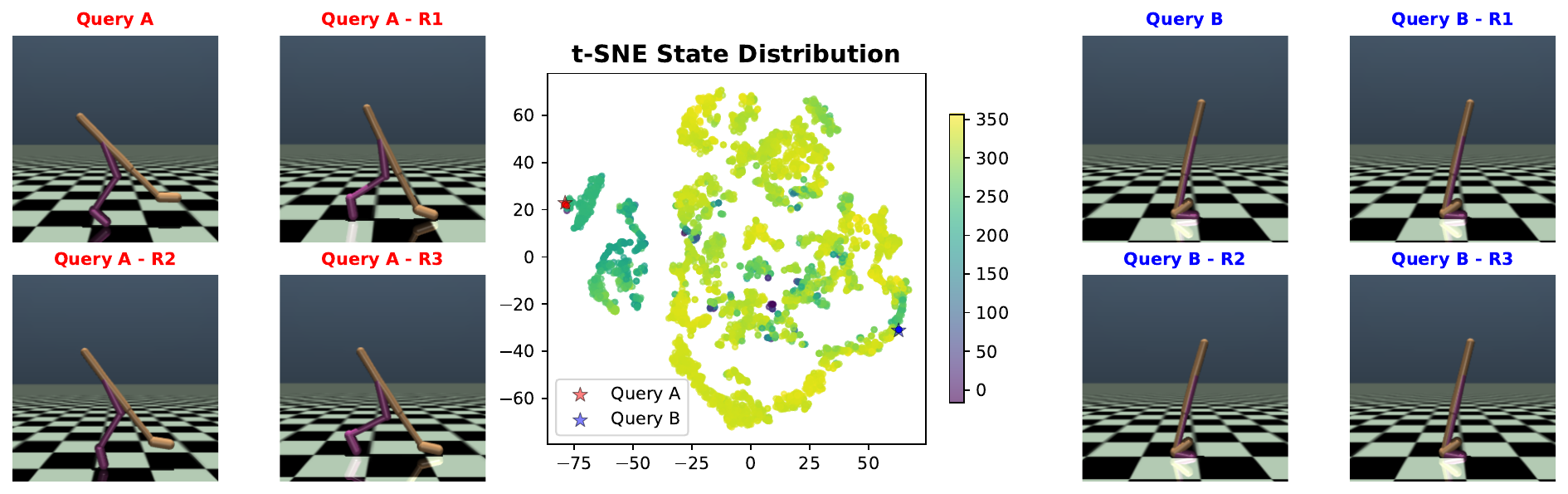}
    

  \caption{
Center: dimension-reduced states and SAC value estimates on \texttt{Walker2d-Medium-Expert}. Left and right: two groups of similar states. 
}
  \label{fig:4tasks}
\end{figure*}

To address these challenges, we propose to cast value learning in offline reinforcement learning as a contextual inference problem, thereby enabling local Q-function approximation through in-context learning. Specifically, we introduce In-context Compositional Q-Learning (\icql), a general framework for offline RL that leverages the in-context learning capability of linear Transformers to infer local Q-functions from small retrieved sets of transitions. Rather than fitting a global approximator of the value function, \icql~leverages the compositional nature and local structure of the task to learn a family of value functions, thereby enabling flexible local adaptation of value estimation within context windows. Our key contributions are summarized as follows:


\begin{enumerate}[label=\textbullet,leftmargin=2em]
\item To the best of our knowledge, we introduce the first offline RL framework \icql\ that \textbf{formulates Q-learning as a contextual inference problem}, leveraging in-context learning with linear Transformers to adaptively infer local Q-functions without requiring explicit subtask labels or predefined subtask structure.
\item We provide a theoretical analysis showing that \textbf{\icql\ achieves bounded approximation error} under two assumptions---linear approximability of the local Q-function and accurate inference of the weights from retrieved context---and prove that the greedy policy with respect to the inferred Q-function is \textbf{near-optimal}.
\item \textbf{\icql\ improves performance in offline settings through in-context local approximation}, and we demonstrate the effectiveness of \icql\ in both offline Q-learning and offline actor-critic frameworks. On Gym and Adroit tasks, \texttt{ICQL} yields score improvements of \textbf{8.8\%} and \textbf{6.3\%}, respectively. Notably, on Kitchen tasks, \texttt{ICQL} achieves a \textbf{16.4\%} performance improvement over the second-best baseline. We further show that \texttt{ICQL} yields more accurate value estimation than strong baselines. These results highlight the underexplored potential of linear attention for robust and compositional value estimation in offline RL.
\item We conduct extensive ablation studies to isolate the contributions of in-context learning and localized value inference. In addition, we investigate the impact of different retrieval strategies, including similarity metrics and context selection criteria, on overall performance and stability.
\end{enumerate}

\section{Related Work}

\textbf{Offline Reinforcement Learning.} Offline RL aims to learn effective policies from static datasets without further interaction with the environment. A central challenge in this setting is distributional shift, which can lead to severe value overestimation and degraded policy performance when the learned policy queries out-of-distribution actions. Several influential approaches address this issue by modifying Q-learning objectives or introducing conservative regularization. Representative examples include \texttt{CQL} \citep{kumar2020conservative}, \texttt{IQL} \citep{kostrikov2022offline}, and \texttt{TD3+BC} \citep{DBLP:conf/nips/FujimotoG21}. \texttt{CQL} introduces a conservative penalty on Q-values for out-of-distribution actions to mitigate overestimation. \texttt{TD3+BC} combines TD3 with a behavior cloning loss to bias policy updates toward actions in the dataset while retaining value-based learning. \texttt{IQL} removes explicit policy optimization and learns value-weighted regression targets to implicitly extract high-value actions from offline data. More recent work has continued to improve offline RL from several complementary directions. \texttt{ReBRAC} \citep{rebrac} revisits the minimalist actor-critic design in offline RL and shows that strong performance can be obtained through careful regularization and implementation choices. \texttt{DMG} \citep{dmg} studies how to obtain better generalization in offline RL under mild assumptions. \texttt{FQL} \citep{fql} introduces a flow-based perspective for Q-learning, while \texttt{QC} \citep{qc} explores temporally structured action representations through action chunking. In addition, retrieval-augmented sequence-modeling approaches such as \texttt{RA-DT} \citep{radt} enrich decision-transformer-style policies with external memory for in-context RL. Despite their differences, these methods still mainly rely on global value or policy models trained over the entire dataset. Such global modeling can be limiting in compositional environments, where local transition structure and subtask-specific value patterns may vary substantially across the state space. In contrast, our approach casts value learning as a collection of local estimation problems and uses in-context inference to adapt Q-functions to retrieved local transition dynamics without requiring additional supervision.

\textbf{In-context Learning in RL.} Recent work has applied Transformers to offline RL, often through sequence modeling for return-conditioned policy learning \citep{zhao2025unveiling}. For example, Decision Transformer \citep{chen2021decision} and Gato \citep{DBLP:journals/tmlr/ReedZPCNBGSKSEBREHCHVBF22} treat trajectories as sequences, while replay-based in-context RL \citep{chen2021decision,DBLP:journals/tmlr/ReedZPCNBGSKSEBREHCHVBF22} uses Transformers for behavior cloning and reward learning. These approaches leverage the ability of pre-trained Transformers to adapt through prompt conditioning or in-context learning. In-context learning has both a strong theoretical foundation \citep{DBLP:conf/icml/OswaldNRSMZV23,DBLP:conf/icml/ShenMK24,DBLP:conf/iclr/WangBDZ25} and strong empirical performance across tasks \citep{DBLP:conf/iclr/Hollmann0EH23,DBLP:conf/iclr/MicheliAF23}, and it is increasingly studied in supervised settings \citep{DBLP:conf/iclr/LaskinWOPSSSHFB23,DBLP:conf/nips/0002XPCFNB23,DBLP:journals/corr/abs-2406-05064}. \citep{DBLP:conf/iclr/LaskinWOPSSSHFB23} proposes Algorithm Distillation (\texttt{AD}) to mimic the data-collection policy, but this approach is constrained by the quality of the original algorithm. \texttt{DPT} \citep{DBLP:conf/nips/0002XPCFNB23} improves regret in contextual bandits through in-context learning, but it assumes access to optimal actions, which is often unrealistic in offline RL. \texttt{PreDeToR} \citep{DBLP:journals/corr/abs-2406-05064} adds reward prediction to Decision Transformer models, yet it still focuses on action generation. While these approaches focus on directly generating actions or policies from trajectories, they do not explicitly target value estimation, which is outside the scope of this paper. Therefore, we do not include these methods as baselines. Although recent work has explored Transformers in offline RL primarily for trajectory modeling or return-conditioned generation \citep{chen2021decision,DBLP:conf/iclr/LaskinWOPSSSHFB23,DBLP:journals/corr/abs-2406-05064}, we instead study linear attention as a tool for in-context value learning. Our results suggest that linear attention, when used for local Q-function estimation, provides strong performance and generalization benefits. To our knowledge, this is the first work to demonstrate the potential of linear attention for compositional value-based offline RL.

\section{Methodology}
In this section, we introduce the proposed \texttt{ICQL} framework, including its local value modeling, retrieval mechanism, learning procedure, and theoretical properties.

\subsection{Local Q-functions}
In this section, we define local Q-functions for offline RL based on the local neighborhood associated with each state. We let $\cD$ denote the dataset containing all offline transitions.
\qs{\begin{definition}(Local $Q$-function Approximation)\label{defn:q_func}
Given a transition $(s,a,r,s',a')\in\cD$, for some $d, \Bar{d}>0$, a nearby transition $(\Bar{s}, \Bar{a}, \Bar{r}, \Bar{s}', \Bar{a}')\in\cD$ is defined as
\begin{equation}\label{eq:local_trans}
    (\Bar{s}, \Bar{a}, \Bar{r}, \Bar{s}', \Bar{a}')\in \Big\{(s_i,a_i,r_i,s'_i,a'_i)\in\cD\Big| \norm{s_i - s}_2^2 \leq d^2 {\ \rm and\ } \norm{s_i' - s_i}_2^2 \leq \Bar{d}^2 \Big\}\triangleq \Omega^{(d,\Bar{d})}_s.
\end{equation}
For any transition $(\Bar{s}, \Bar{a}, \Bar{r}, \Bar{s}', \Bar{a}')\in\Omega^{(d,\Bar{d})}_s$, there exists an optimal uniform local weight vector $w^*_s$ such that the local $Q$-function approximation is defined as
\begin{equation}\label{eq:local_q}
    \hat{Q}_{\Omega^{(d,\Bar{d})}_s}(\Bar{s},\Bar{a}) \triangleq {w^*_s}^T \phi(\Bar{s},\Bar{a}),\quad \forall (\Bar{s}, \Bar{a}, \Bar{r}, \Bar{s}', \Bar{a}')\in\Omega^{(d,\Bar{d})}_s,
\end{equation}
where the function $\phi:\cS\times\cA\to\mathbb{R}^d$ is the feature function of the state-action pair $(\Bar{s}, \Bar{a})$. The best approximation to the local Q-function $Q_{\Omega^{(d,\Bar{d})}_s}(\Bar{s},\Bar{a})$ is $\hat{Q}_{\Omega^{(d,\Bar{d})}_s}(\Bar{s},\Bar{a})$, that is, there exists some $\varepsilon^s_{\text{approx}}>0$ such that
\begin{equation}
    \left| Q_{\Omega^{(d,\Bar{d})}_s}(\Bar{s}, \Bar{a}) - {w_s^*}^\top \phi(\bar{s}, \bar{a}) \right| \le \varepsilon^s_{\text{approx}}, \quad \forall (\Bar{s}, \Bar{a}, \Bar{r}, \Bar{s}', \Bar{a}')\in\Omega^{(d,\Bar{d})}_s.
\end{equation}
\end{definition}}
\qs{In the remainder of this paper, we omit $\Bar{d}$ from the notation of $\Omega_s^{(d,\Bar{d})}$ in \Cref{eq:local_trans}, because the condition $\norm{\Bar{s}' - \Bar{s}}_2^2 \leq \Bar{d}^2$ for some $\Bar{d}>0$ can be readily satisfied in continuous domains. We use $\Omega_s^d$ to denote $\Omega_s^{(d,\Bar{d})}$ instead.} \qs{The local $Q$-function defined in \Cref{eq:local_q} can be viewed as a localized formulation of general linear $Q$-function approximation, which has been widely used in prior work \citep{pmlr-v167-yin22a,Du2019IsAG,Poupart2002PiecewiseLV,Parr2008AnAO}. We assume that each local domain $\Omega_s^d$ admits its own state-dependent local structure. This perspective has been studied both theoretically and empirically and has been shown to yield improved $Q$-function approximation and strong performance on complex tasks; see \Cref{apx:other_work} for additional discussion.} \rebuttal{In practice, the radius $d$ is not directly tunable: it depends on the underlying density and geometry of the dataset and is unknown to the algorithm. Therefore, we adopt a retrieval mechanism with size parameter $k$ to control locality in practice.}

\begin{figure*}[t]
    \centering
    \includegraphics[width=\textwidth]{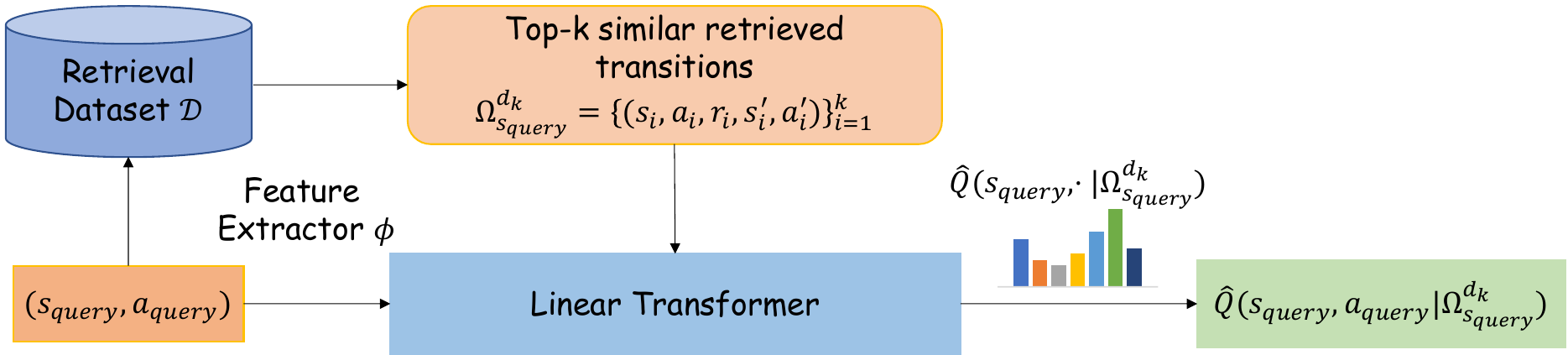}
    \caption{
    An overview of In-Context Compositional Q-Learning (\icql). Given a query state-action pair $(s_{\rm query},a_{\rm query})$, the model embeds it with the feature extractor $\phi$, retrieves the top-$k$ most similar transitions from the offline dataset $\cD$, and forms a local context set. A local linear Q-function $\hat{Q}(s, a|\Omega_{s_{\rm query}}^{d_k})$ is then estimated from the retrieved context and used to update the actor.
}
    \label{fig:icql}
\end{figure*}

\subsection{Retrieval Methods}\label{sec:retrieve_method}
\qs{In this section, we introduce the methods used to retrieve transitions from the offline dataset $\cD$. We focus on three strategies: state-similar retrieval, random retrieval, and state-similar-with-high-reward retrieval. Each retrieval strategy provides a different level of coverage of the local neighborhood $\Omega_{s_{\rm query}}^{d_k}$ associated with the query state $s_{\rm query}$. Both state-similar retrieval and state-similar-with-high-reward retrieval are expected to capture more accurate and comprehensive local information from the local neighborhood $\Omega_s^d$. The difference is that state-similar retrieval can preserve greater diversity in the action space, whereas state-similar-with-high-reward retrieval is intended to recover higher-quality transitions. We define state-similar retrieval in this section. See~\Cref{apdx:retrieve_def} for additional details and the definitions of the other two retrieval methods.}
\begin{definition}[State-Similar Retrieval]\label{def:state_retrieve}
Given the query state $s_{\rm query}$, \texttt{ICQL} retrieves $k$ transitions with the smallest $l_2$-distance between the retrieved state $s_{i}$ and $s_{\rm query}$, \ie,
\begin{equation}\label{eq:local_retrieve}
    \overline{\Omega}_{s_{\rm query}} \triangleq \Big\{(s_i,a_i,r_i,s'_i,a'_i)\in \cD\Big| s_i\in\operatorname{arg\,top\text{-}k} \left\{ -\|s_{\rm query} - s_i\|^2_2 \right\}\Big\}.
\end{equation}   
\end{definition}

Let us define $d^{s_{\rm query}}_k\triangleq \max_{(s_i,a_i,r_i,s'_i,a'_i)\in \overline{\Omega}_{s_{\rm query}}} \{\norm{s_{\rm query} - s_i}^2\}$. Then, we have $\overline{\Omega}^k_{s_{\rm query}} = \Omega_{s_{\rm query}}^{d^{s_{\rm query}}_k}$. The quantity $d^{s_{\rm query}}_k$ depends on the query state $s_{\rm query}$, but for ease of presentation, we use $d_k$ to denote $d^{s_{\rm query}}_k$. Because the main version of \icql~uses the fixed state-similar retrieval method, we use $\Omega_{s_{\rm query}}^{d_k}$ to denote the retrieved context provided to \icql~for notational consistency. In the next section, we explain how the transitions in $\Omega_{s_{\rm query}}^{d_k}$ are used to learn the best local Q-function approximation $\hat{Q}_{\Omega^{d_k}_{s_{\rm query}}}(s,a)$ for all $({s}, {a}, {r}, {s}', {a}')\in\Omega^{d_k}_{s_{\rm query}}$ through in-context learning.

\subsection{In-context Compositional Q-Learning}\label{sec:iccql}
We now describe how compositional Q-functions are learned through contextual inference. We first define a context-dependent weight function for estimating the optimal local weight vector $w_s^*$ defined in \Cref{defn:q_func} for each state $s$.
\begin{definition}[Context-dependent Weights]\label{def:con_dep_weg}
    The local weight function $w_s:\cP(\Omega)\to\mathbb{R}^d$ is a context-dependent function inferred through in-context learning or retrieval-based adaptation, where $\cP(\Omega)= \{A|A\subseteq\Omega \}$ is the power set of $\Omega$ and $\Omega$ contains all possible transitions for a given task.
\end{definition}
We emphasize that the offline dataset $\cD\subseteq\Omega$. Based on \Cref{def:con_dep_weg}, there exists some $\Omega_s^*\subseteq \Omega$ such that $w_s(\Omega_s^*)=w_s^*$. It is not necessary that $\Omega_s^*\subseteq\cD$. Different retrieval methods can be used to cover $\Omega_s^*$ as much as possible, thereby improving weight approximation. Then, for any query state $s_{\rm query}$ and action $a_{\rm query}$, suppose $\Omega_{s_{\rm query}}^{d_k}$ is the set of size $N$ containing the $k$ retrieved transitions from $\cD$ obtained by the state-similarity criterion defined in \Cref{sec:retrieve_method}. Inspired by~\citet{DBLP:conf/iclr/WangBDZ25}, the input ``prompt'' matrix for linear Transformer is constructed as 
\begin{equation}\label{eq:prompt_mat}
    Z_0= \begin{bmatrix}
        \phi_0 & \cdots  & \phi_{N-1} & \phi_{\rm query} \\ 
        \gamma \phi'_0 & \cdots & \gamma \phi'_{N-1} & 0 \\
        r_0 & \cdots & r_{N-1} & 0
    \end{bmatrix},
\end{equation}
where $\phi$ is the state-action-pair feature extractor, $\phi_i\triangleq \phi(s_i,a_i)$ and $\phi'_i\triangleq \phi(s'_i,a'_i)$ and $\phi_{\rm query}\triangleq \phi(s_{\rm query}, a_{\rm query})$ for any $a_{\rm query} \in\cA$. Assume an $L$-layer linear Transformer, for $\ell = 0,1,\cdots,L-1$, each layer $\ell$ has weight matrices $P_{\ell}$ and $G_{\ell}$ defined as
\begin{equation}\label{eq:linear-trans_weight}
    \begin{aligned}
     P_\ell\triangleq  \begin{bmatrix}
        0_{2d\times2d} & 0_{2d\times 1} \\ 0_{1\times2d} & 1
    \end{bmatrix},G_\ell \triangleq \begin{bmatrix}
        -C_\ell^T & C_\ell^T & 0_{d\times 1}\\ 0_{d\times d} & 0_{d\times d} & 0_{d\times 1} \\ 0_{1\times d} & 0_{1\times d} & 0
    \end{bmatrix},
    \end{aligned}
\end{equation}
where all the matrices $\{C_{\ell}\}_{\ell=0}^{L-1} \in \mathbb{R}^{d\times d}$ are trainable parameters and $d$ is the dimension of hidden feature. By feeding $Z_0$ through the linear Transformer ${TF}_\theta^Q$ with linear attention layers $LinAttn(Z; P,G) \triangleq P Z M \left( Z^\top G Z \right)$ and take the bottom-right element on index $[2d + 1, k + 1]$, we obtain a context-dependent Q-function approximation denoted as
\begin{equation}\label{def:local_context_q}
    \hat{Q}(s_{\rm query}, a_{\rm query}| \Omega_{s_{\rm query}}^{d_k}) = w^L_{s_{\rm query}}(\Omega^{d_k}_{s_{\rm query}})^T \phi(s_{\rm query}, a_{\rm query}),
\end{equation}
which approximates $\hat{Q}_{\Omega^{d_k}_{s_{\rm query}}}(s,a) $ defined in \Cref{eq:local_q}. 
Each linear attention layer updates $w^L_{s_{\rm query}}(\Omega^{d_k}_{s_{\rm query}})$ iteratively for each retrieved transition $(s, a, r, s',a')\in\Omega^{d_k}_{s_{\rm query}}$:
\begin{equation}\label{eq:local_w_update}
    \begin{aligned}
        &w^{l+1}_{s_{\rm query}}(\Omega^{d_k}_{s_{\rm query}})\\
        =& w^l_{s_{\rm query}}(\Omega^{d_k}_{s_{\rm_query}}) + \alpha \Big(r + \gamma \hat{Q}(s', a'| \Omega^{d_k}_{s_{\rm_query}}) - \hat{Q}(s, a| \Omega^{d_k}_{s_{\rm_query}})\Big)\grad_{w} \hat{Q}(s, a| \Omega^{d_k}_{s_{\rm_query}})\\
         =& w^l_{s_{\rm_query}}(\Omega^{d_k}_{s_{\rm_query}}) + \alpha \Big(r + \gamma w_{s_{\rm_query}}(\Omega^{d_k}_{s_{\rm_query}})^T \phi(s',a') - w^l_{s_{\rm_query}}(\Omega^{d_k}_{s_{\rm_query}})^T \phi(s,a)\Big)\phi(s,a),
    \end{aligned}
\end{equation}
where $\alpha$ is the learning rate, $l$ denotes the index of linear attention layer, the first equality follows from SARSA \citep{sutton2018reinforcement}, and the second equality follows from \Cref{def:local_context_q}. See~\Cref{apx:construction_linear_transformer} for details on the theorem showing that the proposed \icql~can implement in-context TD learning. 

For training \texttt{ICQL}, we follow \texttt{IQL}~\citep{kostrikov2022offline} by performing value iteration via expectile regression and policy extraction via advantage-weighted regression. Specifically, the critic loss is defined using our local Q-function approximation:
\begin{align}\label{eq:loss_iql}
    \mathcal{L}_{\text{critic}} = \mathbb{E}_{(s, a, r, s') \sim \mathcal{D}} \left[ \rho_\tau \left( \hat{Q}(s, a|\Omega^{d_k}_s) - y \right) \right], 
\end{align}
where $ y = r + \gamma V(s'|\Omega_{s'}^{d_k}), V(s'|\Omega_{s'}^{d_k}) = \mathbb{E}_{a' \sim \pi} \left[ \hat{Q}(s', a'|\Omega^{d_k}_{s'}) \right]$, $V$ is also a context-dependent value estimator, and $\rho_\tau(\cdot)$ denotes the expectile regression error. The policy is optimized via advantage-weighted regression, using an advantage defined by local value estimation conditioned on the current state and its retrieved similar states:
\begin{equation}\label{eq:loss_policy}
    \mathcal{L}_{\text{policy}} = \mathbb{E}_{s \sim \mathcal{D}} \left[ \mathbb{E}_{a \sim \pi} \left[ \exp\left( \beta \cdot (\hat{Q}(s, a|\Omega^{d_k}_s) - V(s|\Omega^{d_k}_s)) \right) \log \pi(a|s) \right] \right].
\end{equation}
After training, the extracted policy can be evaluated independently without any additional retrieval or contextual inference. The pseudocode of training is provided in~\Cref{alg:icql_train}.

\subsection{Theoretical Analysis on \icql}
In this section, we analyze the theoretical properties of our algorithm \icql. \icql\ captures the compositional and local structures of complex decision-making tasks by enabling the Q-function to vary flexibly across different state regions. However, the performance of such local approximators depends critically on two factors:  
\begin{itemize}
    \item[(i)] the expressiveness of the feature representation $\phi(s, a)$,  
    \item[(ii)] the accuracy of the learned weight function $w_s(\Omega^{d_k}_{s})$ in approximating the optimal local weight $w_s^*$ corresponding to the state $s$ and the retrieved offline transition set $\Omega_{s}^{d_k}$.
\end{itemize}

To show that the greedy policy induced by \icql~is near-optimal, we first derive pointwise and expected bounds on the local Q-function approximation error, highlighting how both approximation error and weight estimation error contribute to the total error. Building on these results, we further characterize how the approximation error propagates to policy sub-optimality through the performance difference lemma. These analyses provide theoretical justification for the importance of accurate local value estimation in achieving strong policy performance in offline RL settings. In this section, we present only the assumptions required for the analysis and the main near-optimality theorem for \icql. See \Cref{apx:proofs} for more detailed and comprehensive proofs.

\begin{assumption}
\label{assump:feature_approx}
Let $\phi: \mathcal{S} \times \mathcal{A} \to \mathbb{R}^d$ be a fixed feature map. We assume that for all $(s, a) \in \mathcal{S} \times \mathcal{A}$, the feature norm is bounded as $\|\phi(s, a)\| \le B_\phi$.
\end{assumption}

\begin{remark}
\xqs{Assumption~\ref{assump:feature_approx} is commonly adopted in prior work \citep{wang2020finite,bhandari2018finite,shen2020asynchronous}.} In our experiments, we use a tanh activation function in the last layer of our feature extractor $\phi$, which implies that each component of the feature vector $\phi(s,a)$ is bounded by 1. Hence, we can conclude that $\norm{\phi(s,a)} \leq d$, where $d$ is the feature dimension. This remark supports Assumption~\ref{assump:feature_approx}. 
\end{remark}

\begin{assumption}[Set Coverage]\label{assump:set_coverage}
For each query state $s_{\rm query}\in\cS$, let $\Omega^*_{s_{\rm query}}$ denote the ideal local transition set defined in \Cref{sec:iccql}. Suppose the retrieved set $\Omega^{d_k}_{s_{\rm query}}$ satisfies
\begin{equation}\label{eq:coverage}
    \kappa_{s_{\rm query}} \triangleq \frac{\big|\Omega^{d_k}_{s_{\rm query}}\cap \Omega^*_{s_{\rm query}}\big|}{|\Omega^*_{s_{\rm query}}|} \geq \sigma,
\end{equation}
for some coverage ratio $\sigma \in (0,1]$. Equivalently, at least $m=\sigma|\Omega^*_{s_{\rm query}}|$ transitions from $\Omega^*_{s_{\rm query}}$ are contained in $\Omega^{d_k}_{s_{\rm query}}$. 
\end{assumption}
\begin{remark}
    Assumption~\ref{assump:set_coverage} quantifies how many transitions from $\Omega^*_{s_{\rm query}}$ are covered by the retrieved set $\Omega^{d_k}_{s_{\rm query}}$. This type of coverage condition is standard in nonparametric regression 
\citep{gyorfi2002distribution,devroye1996probabilistic,cover1967nearest,kpotufe2011knn} 
and has also been widely adopted in the analysis of offline RL through concentrability or coverage coefficients 
\citep{munos2003error,munos2007performance,antos2008learning,chen2019information,xie2021bellman}. The value of $d_k$ and the choice of retrieval method both affect $\kappa_s$. We present an ablation study on the number of retrieved transitions and the retrieval method in \Cref{sec:abla}.
\end{remark}

We now present our main theorem, which shows that the performance of the greedy policy with respect to the learned local Q-function approximation $\hat{Q}(s,a|\Omega^{d_k}_{s_{\rm query}})$ is near-optimal.

\begin{theorem}[Policy Performance Gap]\label{thm:policy_gap}
Suppose Assumptions~\ref{assump:feature_approx} and \ref{assump:set_coverage} hold, 
and the learned policy $\pi$ is greedy with respect to $\hat{Q}(s, a|\Omega^{d_k}_s)$. 
Then, with probability at least $1-\delta$, the performance gap is bounded as
\begin{equation}\label{eq:policy_gap_new}
    J(\pi^*) - J(\pi) 
    \;\le\; \frac{2}{1 - \gamma} \; \mathbb{E}_{s \sim d^\pi} \left[ \varepsilon^s_{\text{approx}} (1+B_\phi)
    + C B_\phi \sqrt{\frac{d+\log(1/\delta)}{\sigma\,|\Omega^{d_k}_{s}|}} \right],
\end{equation}
where $C>0$ depends on $B_\phi,B_r$ and the conditioning of the local Gram matrix.
\end{theorem}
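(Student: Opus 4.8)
The plan is to reduce the policy performance gap to a pointwise bound on the Q-function approximation error and then invoke the standard greedy-policy suboptimality estimate. First I would establish the familiar consequence of the performance difference lemma: if $\pi$ is greedy with respect to any function $\hat Q$, then
\[
J(\pi^*) - J(\pi) \;\le\; \frac{2}{1-\gamma}\, \mathbb{E}_{s\sim d^\pi}\Big[\sup_a \big|Q^*(s,a) - \hat Q(s,a|\Omega^{d_{\min}}_s)\big|\Big].
\]
This is precisely where the horizon factor $\tfrac{2}{1-\gamma}$ and the visitation-measure expectation $\mathbb{E}_{s\sim d^\pi}$ appearing in \eqref{eq:policy_gap_new} come from; the factor of two arises from telescoping $Q^*(s,\pi^*)-\hat Q + \hat Q - Q^*(s,\pi)$ and applying the greedy inequality $\hat Q(s,\pi)\ge \hat Q(s,\pi^*)$, while the $\tfrac{1}{1-\gamma}$ absorbs the geometric propagation of one-step errors over the horizon.

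The bulk of the work is to bound the inner error $\big|Q^*(s,a)-\hat Q(s,a|\Omega^{d_{\min}}_s)\big|$ for each $(s,a)$, which I would split via the triangle inequality into an approximation part and an estimation part,
\[
\big|Q^* - \hat Q\big| \;\le\; \underbrace{\big|Q^* - {w_s^*}^\top\phi\big|}_{\text{approximation}} \;+\; \underbrace{\big|\,({w_s^*}-w_{s}(\Omega^{d_{\min}}_s))^\top\phi\,\big|}_{\text{estimation}}.
\]
The approximation part is controlled directly by \Cref{defn:q_func}, which bounds it by $\varepsilon^s_{\text{approx}}$ on the local region. The extra multiplicative factor $(1+B_\phi)$ in the stated theorem comes from carefully tracking how this per-transition approximation error enters both the value estimate $\hat Q(s,a)$ directly and, via the bootstrapped target $V(s'\mid\Omega^{d_{\min}}_{s'})=\mathbb{E}_{a'\sim\pi}[\hat Q(s',a')]$, a second time through a quantity that is itself linear in $\phi$; Cauchy–Schwarz together with Assumption~\ref{assump:feature_approx} then contributes the additional $B_\phi$-scaled term.

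For the estimation part I would apply Cauchy–Schwarz and Assumption~\ref{assump:feature_approx} to obtain $\big|({w_s^*}-w_s)^\top\phi\big|\le B_\phi\,\|w_s - w_s^*\|$, and then bound the weight error by a least-squares / TD-fixed-point concentration argument. Under the Set Coverage Assumption~\ref{assump:set_coverage}, at least $m=\sigma\,|\Omega^{d_{\min}}_s|$ of the retrieved transitions lie in the ideal set $\Omega^*_s$ on which $w_s^*$ is the regression target, so the empirical Gram matrix $\sum \phi\phi^\top$ concentrates around its population counterpart with effective sample size $m$. A standard self-normalized (or matrix-Bernstein) bound then yields $\|w_s-w_s^*\|\le C\sqrt{(d+\log(1/\delta))/(\sigma\,|\Omega^{d_{\min}}_s|)}$ with probability at least $1-\delta$, where the constant $C$ absorbs the reward bound $B_r$ and the smallest eigenvalue (conditioning) of the local Gram matrix. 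This is the sole source of the $1-\delta$ confidence and of the $\sqrt{d+\log(1/\delta)}$ dimension dependence.

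Combining the three pieces — the greedy reduction, the approximation bound, and the high-probability estimation bound — and pushing the expectation $\mathbb{E}_{s\sim d^\pi}$ through the pointwise estimate yields exactly \eqref{eq:policy_gap_new}. The main obstacle I anticipate is the estimation step: making the concentration of the local Gram matrix rigorous when only a $\sigma$-fraction of the retrieved context is guaranteed to be on-target, and ensuring the conditioning constant is well-defined (i.e.\ the local Gram matrix is invertible with a controlled minimum eigenvalue so that $C$ does not blow up). The approximation-to-target propagation that produces the $(1+B_\phi)$ factor also requires care, since one must avoid double-counting the bootstrap error that is already captured by the $\tfrac{1}{1-\gamma}$ horizon factor from the first step.
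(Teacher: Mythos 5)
Your overall architecture matches the paper's proof exactly: the performance difference lemma (Lemma~\ref{lem:prem_diff}) plus the three-term telescoping $Q^*(s,\pi^*)-\hat Q(s,\pi^*)+\hat Q(s,\pi^*)-\hat Q(s,\pi)+\hat Q(s,\pi)-Q^*(s,\pi)$ with the greedy inequality killing the middle term, giving the $\tfrac{2}{1-\gamma}\,\mathbb{E}_{s\sim d^\pi}[\delta(s)]$ reduction; then a pointwise split into approximation and estimation errors, Cauchy--Schwarz with $\|\phi(s,a)\|\le B_\phi$, and Gram-matrix concentration over the $\sigma$-fraction of on-target retrieved transitions to produce the $\sqrt{(d+\log(1/\delta))/(\sigma|\Omega^{d_{\min}}_s|)}$ term. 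All of this is the paper's route through Theorems~\ref{thm:coverage_weight_error} and~\ref{thm:point_error} into the proof in Appendix~\ref{appendix:proof_policy}.

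The one step that would fail as described is your derivation of the $(1+B_\phi)$ coefficient. You attribute it to the approximation error entering a second time through the bootstrapped target $V(s'\mid\Omega^{d_{\min}}_{s'})$; but the paper's pointwise bound (Theorem~\ref{thm:point_error}) is a purely static regression statement at a fixed state and never touches the Bellman backup --- and, as you yourself worry at the end, routing the error through the bootstrap risks double-counting against the $\tfrac{1}{1-\gamma}$ horizon factor. The actual source is the weight-error bound itself: because the regression targets are only $\varepsilon^s_{\text{approx}}$-close to linear in $\phi$, the least-squares estimate carries an additive misspecification bias, i.e.\ $\|w_s(\Omega^{d_{\min}}_s)-w_s^*\|\le C\sqrt{(d+\log(1/\delta))/(\sigma|\Omega^{d_{\min}}_s|)}+\varepsilon^s_{\text{approx}}$ (Theorem~\ref{thm:coverage_weight_error}). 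Your stated weight bound omits exactly this bias term, which is the term that, after multiplication by $B_\phi$ in the Cauchy--Schwarz step, combines with the direct residual $\varepsilon^s_{\text{approx}}$ to give $(1+B_\phi)\varepsilon^s_{\text{approx}}$. Restoring the bias in the weight bound (and dropping the bootstrap detour) closes the gap. One caveat you share with the paper rather than diverge from: both arguments silently identify the local target $Q_{\Omega^{d_{\min}}_s}$ with $Q^*$ when plugging the pointwise bound into the performance-difference decomposition.
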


\begin{proof}
See \Cref{appendix:proof_policy} for details.
\end{proof}

\section{Experiments}
In this section, we evaluate \texttt{ICQL} on standard offline RL benchmarks and analyze its empirical behavior. We first describe the benchmark environments and datasets used in our experiments. Then we present the results and analyze the performance of \texttt{ICQL} across tasks and variants.

\subsection{Environments and Datasets}
We evaluate our method on a diverse set of continuous control benchmarks from the D4RL suite~\citep{fu2020d4rl}, which includes three types of offline reinforcement learning environments: 

\textbf{MuJoCo tasks} (e.g., \texttt{HalfCheetah-Medium}) are standard locomotion environments based on MuJoCo~\citep{todorov2012mujoco}, featuring smooth dynamics and dense rewards. These tasks are commonly used to assess sample efficiency and stability.

\textbf{Adroit tasks} (e.g., \texttt{Pen-Human}) involve high-dimensional dexterous manipulation using a 24-DoF robotic hand. The action spaces are complex and the datasets are collected from human demonstration or behavior imitation, making them challenging due to limited action coverage.

\textbf{Kitchen tasks} (e.g., \texttt{Kitchen-Complete}) are long-horizon goal-conditioned tasks that require solving compositional subtasks (e.g., turning on lights, opening cabinets). These tasks emphasize multi-stage behavior and compositional reasoning.

\subsection{Main Results}
We compare our method against five widely adopted offline RL algorithms: \texttt{BC}, \texttt{DT}~\citep{chen2021decision}, \texttt{TD3+BC}~\citep{DBLP:conf/nips/FujimotoG21}, \texttt{CQL}~\citep{kumar2020conservative} and \texttt{IQL}~\citep{kostrikov2022offline}. These baselines represent two complementary paradigms: the first three represent policy constraints, and the last two represent value regularization. The experiment results are shown in~\Cref{tab:main_results}.

\begin{table*}[htbp]
\footnotesize
\centering
\caption{
Performance comparison across Mujoco, Adroit, and Kitchen tasks. Average and standard deviation of scores are reported over 5 random seeds.
}
\label{tab:main_results}
\begin{tabular}{@{}cccccccc@{}}
\toprule
\textbf{Mujoco Tasks}        & \textbf{BC} & \textbf{DT}   & \textbf{TD3+BC} & \textbf{CQL}  & \textbf{IQL}  & \textbf{ICQL(Ours)}              & \textbf{Gain(\%)} \\ \midrule
Walker2d-Medium-Expert-v2    & 107.5       & 70.7          & 109.2           & 98.7          & {\ul{109.8}}   & \textbf{113.3$_{\pm 2.0}$} & 3.1\%             \\
Walker2d-Medium-v2           & 75.3        & 70.2          & 77.0            & {\ul{79.2}}    & 71.5          & \textbf{80.3$_{\pm 5.2}$}  & 1.4\%             \\
Walker2d-Medium-Replay-v2    & 26.0          & 54.8          & 41.5            & {\ul{77.2}}    & 61.0          & \textbf{81.9$_{\pm 5.4}$}  & 6.1\%             \\
Hopper-Medium-Expert-v2      & 52.5        & 57.5          & 78.2            & {\ul{105.4}}   & 98.5          & \textbf{111.0$_{\pm 0.6}$} & 5.3\%             \\
Hopper-Medium-v2             & 52.9        & 57.1          & 53.5            & 58.0          & \textbf{63.3} & {\ul{62.6 $_{\pm 7.9}$}}      & -1.5\%            \\
Hopper-Medium-Replay-v2      & 18.1        & 65.8          & 59.4            & {\ul{95.0}}    & 82.4          & \textbf{96.4$_{\pm 4.9}$}  & 1.5\%             \\
HalfCheetah-Medium-Expert-v2 & 55.2        & 70.8          & 62.8            & 62.4          & {\ul{ 83.4}}    & \textbf{89.1$_{\pm 4.2}$}  & 6.8\%             \\
HalfCheetah-Medium-v2        & 42.6        & 42.8          & 43.1            & {\ul{44.4}}    & 42.5          & \textbf{45.9$_{\pm 0.3}$}  & 3.5\%             \\
HalfCheetah-Medium-Replay-v2 & 36.6        & 39.5          & 41.8            & \textbf{45.5} & 38.9          & {\ul{44.7 $_{\pm 0.1}$}}      & -1.8\%            \\ \midrule
\textbf{Average}             & 51.9        & 58.8          & 62.9            & {\ul{74.0}}    & 72.4          & \textbf{80.6}              & 8.8\%             \\ \midrule
\textbf{Adroit Tasks}        & \textbf{BC} & \textbf{DT}   & \textbf{TD3+BC} & \textbf{CQL}  & \textbf{IQL}  & \textbf{ICQL}              & \textbf{Gain(\%)} \\ \midrule
Pen-Human-v1                 & 63.9        & 79.5          & 64.6            & 37.5          & \textbf{89.5} & {\ul{85.6$_{\pm 5.6}$}}    & -4.3\%            \\
Pen-Cloned-v1                & 37.0          & 74.0          & 76.8            & 39.2          & {\ul 84.9}    & \textbf{89.4$_{\pm 4.8}$}  & 5.4\%             \\
Hammer-Human-v1              & 1.2         & 1.7           & 1.5             & {\ul{4.4}}     & \textbf{7.2}  & 3.7$_{\pm 3.2}$            & -49.4\%           \\
Hammer-Cloned-v1             & 0.6         & {\ul{3.7}}    & 1.8            & 2.1           & 0.5           & \textbf{4.5$_{\pm 5.5}$}   & 23.4\%            \\
Door-Human-v1                & 2.0           & 5.5           & 0.2             & {\ul{9.9}}     & 9.8           & \textbf{17.1$_{\pm 5.5}$}  & 73.1\%            \\
Door-Cloned-v1               & 0.0           & 3.2           & -0.1            & 0.1           & {\ul{7.6}}     & \textbf{11.7$_{\pm 4.4}$}  & 53.6\%            \\ \midrule
\textbf{Average}             & 17.45       & 27.9          & 24.2            & 15.5          & {\ul{33.2}}    & \textbf{35.3}              & 6.3\%             \\ \midrule
\textbf{Kitchen Tasks}       & \textbf{BC} & \textbf{DT}   & \textbf{TD3+BC} & \textbf{CQL}  & \textbf{IQL}  & \textbf{ICQL}              & \textbf{Gain(\%)} \\ \midrule
Kitchen-Complete-v0          & \ul{65.0}     & 52.5          & 57.5            & 43.8          & 59.2    & \textbf{79.3$_{\pm 2.1}$}  & 22.0\%            \\
Kitchen-Mixed-v0             & 51.5        & \textbf{60.0} & 53.5            & 51.0          & 53.3          & {\ul{59.5$_{\pm 6.0}$}}    & -0.8\%            \\
Kitchen-Partial-v0           & 38.0          & {\ul{55.0}}   & 46.7           & 49.8          & 45.8          & \textbf{61.5$_{\pm 5.8}$}  & 11.8\%            \\ \midrule
\textbf{Average}             & 51.5        & {\ul{55.8}}   & 52.6          & 48.2          & 52.8          & \textbf{66.8}              & 16.4\%            \\ \bottomrule
\end{tabular}
\end{table*}

 Results demonstrate that, on MuJoCo tasks, \texttt{ICQL} outperforms second best baseline \texttt{CQL} by 8.8\% on average. On Adroit tasks, \texttt{ICQL} improves \texttt{IQL} by 6.3\%. Notably, on Kitchen task, \texttt{ICQL} achieves a \textbf{16.4\% improvement} over \texttt{DT} on Kitchen tasks, highlighting the importance of compositional value estimation in environments with complex, multi-stage structure. However on Hammer-Human dataset, \icql~is inferior to two baseline methods, which may relate to the dataset quality issue. In \texttt{Hammer-Human}, the size of the dataset is smaller and the distance between query states and retrieved similar states are larger than those of \texttt{Hammer-Cloned}, making it harder for in-context learning. The detailed analysis on this issue is provided in~\Cref{app:fail}. Although \texttt{ICQL} improves performance through in-context learning, the additional computational overhead remains moderate; a detailed analysis is provided in~\Cref{app:computation}.Overall, these results validate the general applicability of \texttt{ICQL} across both value-learning and actor-critic paradigms.

For investigating whether \texttt{ICQL} can produce more accurate value estimation than baseline methods, we conduct analysis on the learned Q function by comparing the Q prediction among \texttt{ICQL}, \texttt{IQL} and online RL method \texttt{SAC}. We plot their Q estimations of the same set of offline dataset entries, and leverage t-SNE for showing their respective Q-estimate distribution over the same state space.~\Cref{fig:q_corr_wkmr} shows the results on \texttt{Walker2d-Medium} dataset, where \texttt{ICQL} shares similarity score of 0.69 with \texttt{SAC} on Q estimation, while \texttt{IQL} can only achieve a similarity score about 0.29. This indicates that the superior performance of \icql~on \texttt{IQL} comes from a better Q estimation, ensured by local Q function estimation, over the noisy dataset. 

\begin{figure*}[htbp]
  \centering
  \includegraphics[width=\linewidth]{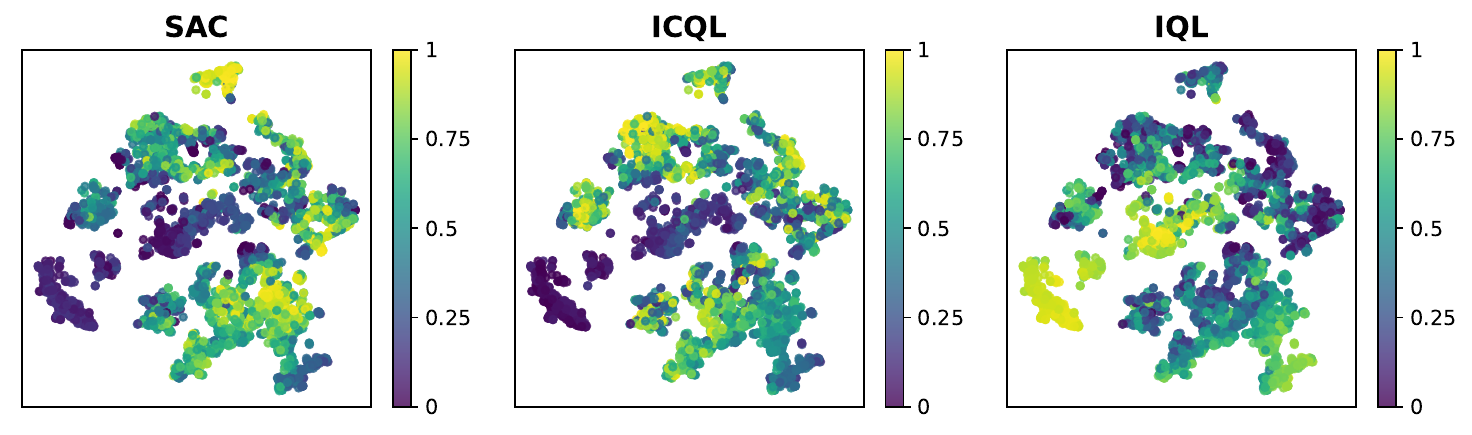}

  \caption{
Q-value distribution on states after t-SNE dimension reduction, of \texttt{Walker2d-Medium} dataset. The partitioned value patterns support our hypothesis that Q-functions are inherently compositional, motivating localized value modeling.
}
  \label{fig:q_corr_wkmr}
\end{figure*}

\subsection{Ablation studies}\label{sec:abla}
In this section, we study how different design choices affect the performance of \texttt{ICQL}, including the number of in-context learning layers, the context length, retrieval strategy and in-context learning module architecture.

\subsubsection{Number of In-context learning layers}\label{sec:abla_nl}
In this experiment, we investigate the effect of in-context learning steps, which is controlled by the number of layers in the in-context critic network. The number of layers is selected from $\{4, 8, 16, 20\}$. The experiments are conducted on MuJoCo tasks.~\Cref{fig:nlayer} displays the experiment outcomes and detailed numerical results are provided in Appendix~\Cref{tab:ctxlen_layer}. From~\Cref{fig:nlayer}, the normalized scores generally increase as the number of layers increase in most of the tasks, indicating that a larger number of layers may lead to more sufficient in-context value-learning. While the phenomenon is not obvious in Hopper tasks, one possible reason is the significant distribution shift in Hopper environment due to the high variance of transition dynamics.
\begin{figure*}[htbp]
  \centering

  \begin{subfigure}[t]{0.32\textwidth}
    \centering
    \includegraphics[width=\linewidth]{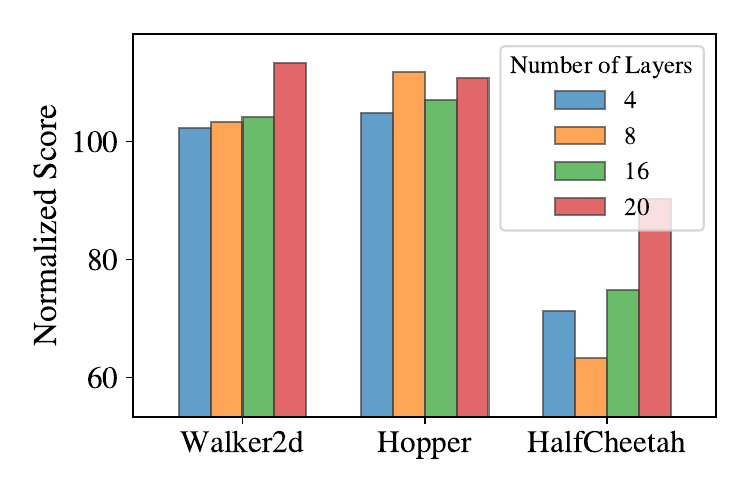}
    \captionsetup{font=scriptsize}
    \caption{\texttt{Medium-Expert}}
    \label{fig:nlayer_me}
  \end{subfigure}
  \begin{subfigure}[t]{0.32\textwidth}
    \centering
    \includegraphics[width=\linewidth]{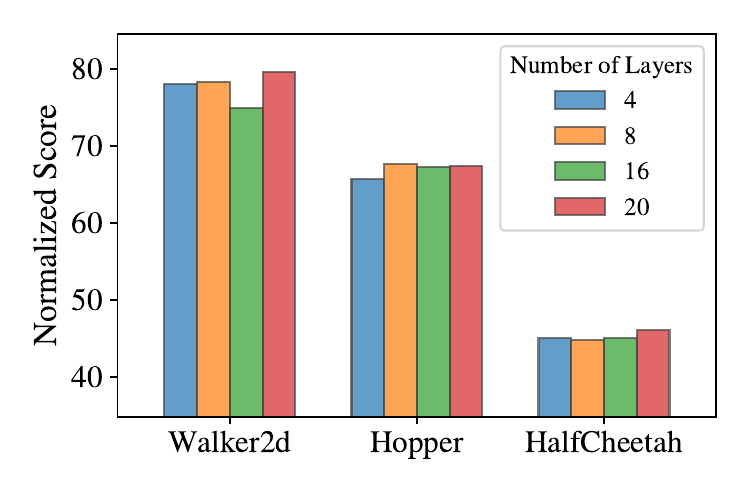}
    \captionsetup{font=scriptsize}
    \caption{\texttt{Medium}}
    \label{fig:nlayer_m}
  \end{subfigure}
  \begin{subfigure}[t]{0.32\textwidth}
    \centering
    \includegraphics[width=\linewidth]{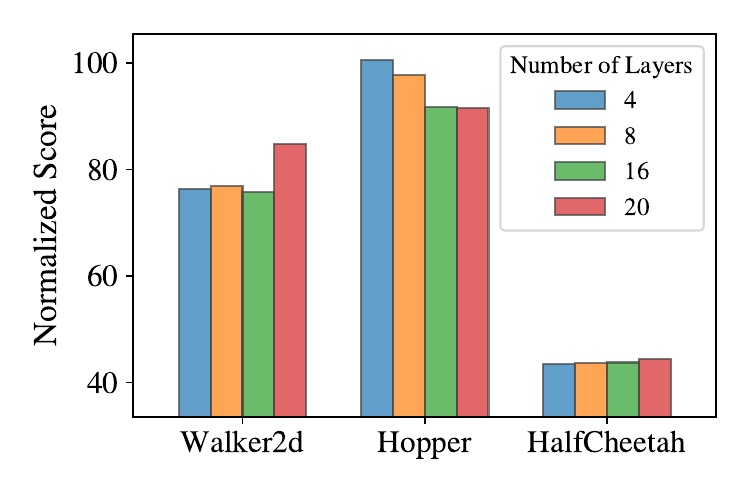}
    \captionsetup{font=scriptsize}
    \caption{\texttt{Medium-Replay}}
    \label{fig:nlayer_mr}
  \end{subfigure}

  \caption{
Normalized scores of different number of in-context learning layers on Mujoco tasks. Each color represents different number of layers, and the y-axis represents the normalized score. 
}
  \label{fig:nlayer}
\end{figure*}

\begin{figure*}[htbp]
  \centering
    
  \begin{subfigure}[t]{0.32\textwidth}
    \centering
    \includegraphics[width=\linewidth]{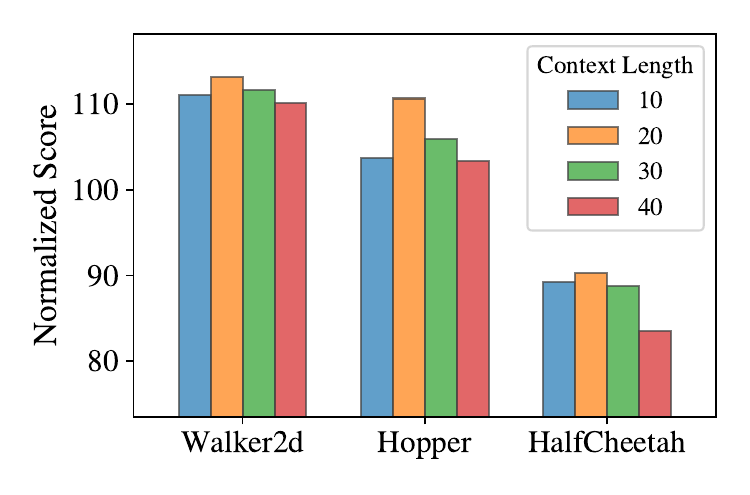}
    \captionsetup{font=scriptsize}
    \caption{\texttt{Medium-Expert}}
    \label{fig:ctxlen_me}
  \end{subfigure}
  \begin{subfigure}[t]{0.32\textwidth}
    \centering
    \includegraphics[width=\linewidth]{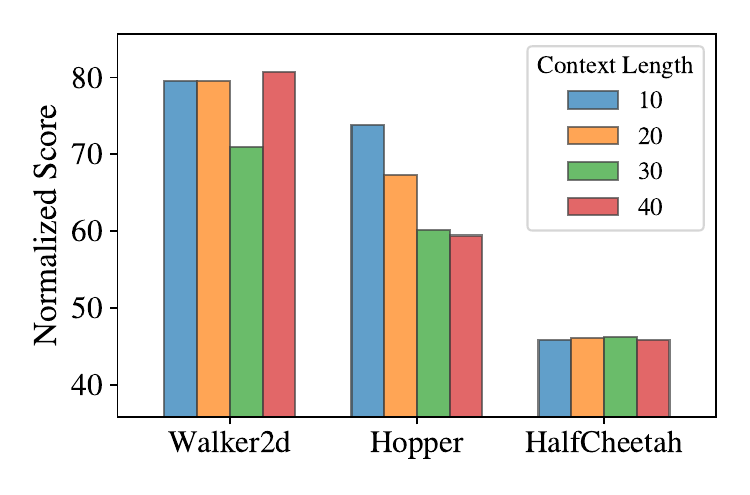}
    \captionsetup{font=scriptsize}
    \caption{\texttt{Medium}}
    \label{fig:ctxlen_m}
  \end{subfigure}
  \begin{subfigure}[t]{0.32\textwidth}
    \centering
    \includegraphics[width=\linewidth]{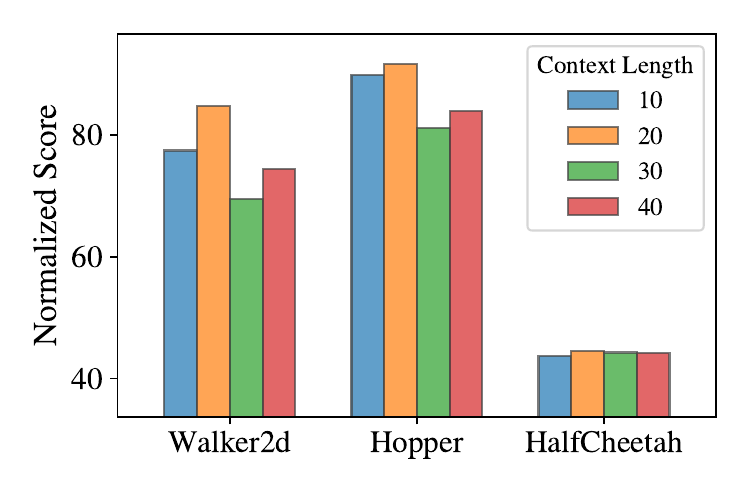}
    \captionsetup{font=scriptsize}
    \caption{\texttt{Medium-Replay}}
    \label{fig:ctxlen_mr}
  \end{subfigure}

  \caption{
Normalized scores of context lengths on Mujoco tasks. Each color represents different context lengths, and the y-axis represents the normalized score.
}
  \label{fig:ctxlen_all}
\end{figure*}

\subsubsection{Influence of Context Length}\label{sec:abla_k}
In this experiment, we investigate the effect of context lengths in \texttt{ICQL}. The context lengths are selected from $\{10, 20, 30, 40\}$. As shown in~\Cref{fig:ctxlen_all}, a context length of 20 yields the generally best performance for in-context TD-learning in Gym tasks, where too long or too short context lengths lead to sub-optimal results. These results provide evidence that the ``locality'' of context is crucial for in-context learning performance. As the context length increases, the distance between query state and context transitions also gets larger, which may break the “local” definition and bring noise into the in-context learning process. Detailed numerical results are shown in~\Cref{tab:ctxlen_layer}.

\subsubsection{Context Retrieval Strategies}
In this experiment, we investigate the impact of retrieval quality by applying different context retrieval strategies to \texttt{ICQL}. In addition to the standard \textbf{State-Similar Retrieval}, we compare two additional retrieval strategies: (1) \textbf{Random Retrieval}, which selects transitions uniformly at random from the offline dataset; and (2) \textbf{State-Similar-with-High-Rewards Retrieval}, which further filters similar-state candidates by selecting those with higher rewards. The definitions of these three retrieval methods are provided in~\Cref{sec:retrieve_method} and ~\Cref{apdx:retrieve_def}.

\begin{table*}[htbp]
\centering
\small
\caption{
Ablation study on retrieval strategies of \texttt{ICQL}. We compare three variants: \textbf{Random Retrieval}, \textbf{ State-Similar Retrieval}, and \textbf{State-Similar-with-High-Rewards Retrieval}. 
}
\begin{tabular}{@{}c R{0.9cm} R{1.2cm} R{2.7cm} @{}}
\toprule
\textbf{Dataset}             & \multicolumn{1}{c}{\textbf{Random}} & \multicolumn{1}{c}{\textbf{State-Similar}} & \multicolumn{1}{c}{\textbf{State-Similar-with-High-Rewards}} \\ \midrule
Walker2d-Medium-v2           & 78.1           & 80.3                  & \textbf{83.9}                    \\
Walker2d-Medium-Replay-v2    & 67.5           & \textbf{81.9}         & 75.1                             \\
Hopper-Medium-v2             & \textbf{74.1}  & 62.6                  & 59.9                             \\
Hopper-Medium-Replay-v2      & 81.0           & \textbf{96.4}         & 90.8                             \\
HalfCheetah-Medium-v2        & 45.5           & 45.9                  & \textbf{46.4}                    \\
HalfCheetah-Medium-Replay-v2 & 43.4           & \textbf{44.7}         & 43.2                             \\
Pen-Human-v1                 & 75.1           & \textbf{85.6}         & 84.8                             \\
Hammer-Human-v1              & 1.4            & 3.7                   & \textbf{4.4}                     \\
Door-Human-v1                & 12.0           & \textbf{17.1}         & 15.6                             \\
Kitchen-Complete-v0          & 70.0           & \textbf{79.3}         & 71.3                             \\
Kitchen-Mixed-v0             & 53.8           & 59.5                  & \textbf{60.0}                    \\
Kitchen-Partial-v0           & 47.5           & \textbf{61.5}         & 50.0                             \\ \bottomrule
\end{tabular}
\label{tab:retri_compare}
\end{table*}
Our results show that \textbf{State-Similar Retrieval} provides the strongest overall performance and is the best-performing strategy on most tasks, demonstrating the benefit of constructing context from locally similar states. \textbf{Random Retrieval} is generally less effective and often yields inferior performance, which highlights the importance of context relevance in local value estimation. Meanwhile, \textbf{State-Similar-with-High-Rewards Retrieval} outperforms the other strategies on several tasks, including \texttt{walker2d-medium}, \texttt{halfcheetah-medium}, \texttt{hammer-human}, and \texttt{kitchen-mixed}. This suggests that incorporating reward information into retrieval can be beneficial when high-value transitions are especially informative, while pure state similarity remains the most robust choice overall.

\subsubsection{Comparison across Different In-context Modeling Choices}
We compare different architectural choices for the in-context module in \texttt{ICQL}, including a linear Transformer, a small linear MLP, and a standard self-attention-based Transformer. The results are shown in~\Cref{tab:compare_tfs}. Overall, the linear Transformer achieves the best performance on the large majority of tasks and shows especially clear advantages on replay-style and long-horizon datasets, such as \texttt{Walker2d-Medium-Replay}, \texttt{Hopper-Medium-Replay}, and the Kitchen benchmarks. The standard Transformer is generally less stable and performs substantially worse on several tasks, while the linear MLP is more competitive but still underperforms the linear Transformer in most cases. Although the linear MLP performs best on \texttt{Hammer-Human} and \texttt{Hammer-Cloned}, this advantage does not transfer consistently to other domains. These results suggest that the proposed linear in-context mechanism is not only theoretically convenient, but also empirically important for stable and effective local Q-function estimation across diverse offline RL environments.
\begin{table}[htbp]
\centering
\small
\caption{Performance comparison across different local modeling choices: linear attention-based Transformer, linear MLP, and standard self-attention-based Transformer.}
\label{tab:compare_tfs}
\begin{tabular}{@{}c R{1.8cm} R{1.3cm} R{2cm} @{}}
\toprule
\textbf{Task} & \multicolumn{1}{c}{\textbf{Linear Transformer}} & \multicolumn{1}{c}{\textbf{Linear MLP}} & \multicolumn{1}{c}{\textbf{Standard Transformer}} \\
\midrule
Walker2d-Medium-Expert & {\bfseries 113.3} & 109.5 & 108.8 \\
Walker2d-Medium & {\bfseries 80.3} & 76.7 & 77.4 \\
Walker2d-Medium-Replay & {\bfseries 81.9} & 60.2 & 42.9 \\
Hopper-Medium-Expert & {\bfseries 111.0} & 109.9 & 70.3 \\
Hopper-Medium & {\bfseries 62.6} & 55.7 & 61.9 \\
Hopper-Medium-Replay & {\bfseries 96.4} & 89.9 & 42.1 \\
HalfCheetah-Medium-Expert & {\bfseries 89.1} & 83.0 & 72.5 \\
HalfCheetah-Medium & {\bfseries 45.9} & 43.3 & 42.0 \\
HalfCheetah-Medium-Replay & {\bfseries 44.7} & 39.2 & 36.1 \\
Pen-Human & {\bfseries 85.6} & 66.6 & 72.7 \\
Pen-Cloned & {\bfseries 89.4} & 80.7 & 83.8 \\
Hammer-Human & 3.7 & {\bfseries 6.1} & 4.2 \\
Hammer-Cloned & 4.5 & {\bfseries 7.9} & 1.8 \\
Door-Human & {\bfseries 17.1} & 6.9 & 8.9 \\
Door-Cloned & {\bfseries 11.7} & 3.5 & 3.4 \\
Kitchen-Complete & {\bfseries 79.3} & 70.0 & 78.3 \\
Kitchen-Mixed & {\bfseries 59.5} & 57.5 & 55.8 \\
Kitchen-Partial & {\bfseries 61.5} & 48.3 & 55.8 \\
\bottomrule
\end{tabular}
\end{table}

\section{Conclusion and Future Work}

We introduced \texttt{ICQL}, a novel offline RL framework that casts value estimation as an in-context inference problem using linear attention. By retrieving local transitions and fitting context-dependent local Q-functions, \texttt{ICQL} enables compositional reasoning without requiring subtask supervision. We provide theoretical guarantees showing that greedy action extraction based on \icql\ yields a near-optimal policy. Experiments show that \texttt{ICQL} achieves strong performance gains and produces value estimates that are closer to those of online RL algorithms. These results highlight the potential of in-context learning as a powerful inductive bias for offline reinforcement learning. Although the methodology of \texttt{ICQL} is agnostic to the choice of distance metric, retrieval quality remains a practical concern in complex, high-dimensional state spaces. An important and promising direction for future work is to combine \texttt{ICQL} with more sophisticated retrieval methods, such as pre-trained state encoders or value-aware learnable retrievers.

\bibliography{iclr2026_conference}
\bibliographystyle{iclr2026_conference}
\section*{LLM Usage Statement}
LLMs were used to aid the writing and polishing of the manuscript. 

\appendix
\section*{Appendix}


\section{Preliminary}
\subsection{Reinforcement Learning}

Consider an infinite-horizon Markov Decision Process (MDP) defined by the tuple $(\mathcal{S}, \mathcal{A}, p_0, p_{\text{MDP}}, \cR, \gamma)$, where $\mathcal{S}$ and $\mathcal{A}$ denote the state and action spaces, respectively. The reward function is $\cR: \mathcal{S} \times \mathcal{A} \to \mathbb{R}$, and the transition dynamics are governed by $p_{\text{MDP}}(s'|s, a)$, which denotes the probability of transitioning to state $s'$ from state $s$ after taking action $a$. The initial state distribution is $p_0: \mathcal{S} \to [0, 1]$, and $\gamma \in [0, 1)$ is the discount factor.

At each time step $t$, the agent observes state $s_t$, selects an action $a_t \sim \pi(\cdot|s_t)$ according to a stochastic policy $\pi: \mathcal{A} \times \mathcal{S} \to [0, 1]$, receives a reward $r_{t} =\cR(s_t, a_t)$, and transitions to the next state $s_{t+1} \sim p_{\text{MDP}}(\cdot|s_t, a_t)$. This interaction generates trajectories of the form $(s_0, a_0, r_0, s_1, a_1, r_1, \dots)$.

Given a policy $\pi$, the associated Q-function and value function quantify the expected cumulative discounted rewards starting from the state-action pair $(s_t, a_t)$ and the state $s_t$, respectively:
\begin{subequations}
\begin{equation}
    Q^\pi(s_t, a_t) \triangleq \mathbb{E}_{a_{t+1},a_{t+2},\dots\sim \pi} \left[ \sum_{i=0}^{\infty} \gamma^i \cR(s_{t+i+1},a_{t+i+1}) | s_t,a_t \right], 
\end{equation}
\begin{equation}
    V^\pi(s_t) \triangleq \mathbb{E}_{a_t \sim \pi(\cdot \mid s_t)} \left[ Q^\pi(s_t, a_t) \right].
\end{equation}
\end{subequations}

The Q-function satisfies the \textit{Bellman Expectation Equation}:
\begin{equation}
    Q^\pi(s, a) = \cR(s, a) + \gamma \, \mathbb{E}_{s' \sim p_{\text{MDP}}(\cdot \mid s, a)} \left[ V^\pi(s') \right].
\end{equation}
Similarly, the value function satisfies:
\begin{equation}
    V^\pi(s) = \mathbb{E}_{a \sim \pi(\cdot \mid s)} \left[ Q^\pi(s, a) \right].
\end{equation}

The goal of reinforcement learning is to learn a policy $\pi_\theta(a|s)$ that maximizes the expected cumulative discounted reward. The optimal value functions satisfy the \textit{Bellman Optimality Equations}:
\begin{subequations}
\begin{equation}
    Q^*(s, a) = \cR(s, a) + \gamma \, \mathbb{E}_{s' \sim p_{\text{MDP}}(\cdot \mid s, a)} \left[ \max_{a'} Q^*(s', a') \right],
\end{equation}
\begin{equation}
    V^*(s) = \max_{a \in \mathcal{A}} Q^*(s, a).
\end{equation}
\end{subequations}

In the offline setting, rather than interacting with the environment, the agent is given a fixed dataset $\mathcal{D} = \{(s, a, r, s')\}$ collected by a behavior policy $\pi_\beta$. Offline RL algorithms aim to learn an effective policy entirely from this static dataset $\mathcal{D}$, without any further interaction with the environment. A central challenge in offline RL is the \textit{distributional shift} \citep{NEURIPS2019_c2073ffa,DBLP:journals/corr/abs-1907-00456,DBLP:journals/corr/abs-2005-01643,DBLP:journals/corr/abs-1911-11361} between the learned policy $\pi$ and the behavior policy $\pi_\beta$, which often leads to overestimation and poor generalization when estimating Q-values for out-of-distribution state-action pairs.

\subsection{In-context Learning with Linear Attentions} 

Recently, there has been significant interest in understanding the theoretical capabilities of in-context learning with linear attention mechanisms \citep{dai2023can,ahn2023transformers,ahn2024linear}, particularly in random instances of linear regression and simple classification tasks. We formally introduce these problem settings in this section. Throughout this paper, all vectors are treated as column vectors. We denote the identity matrix in $\mathbb{R}^n$ by $I_n$, and the $m \times n$ all-zero matrix by $0_{m \times n}$. For any matrix $Z$, we use $Z^\top$ to denote its transpose, and we use both $\langle x, y \rangle$ and $x^\top y$ interchangeably to denote the inner product.

We define a prompt matrix $Z \in \mathbb{R}^{(d+1) \times (n+1)}$ as follows: 
\begin{equation}
    Z \triangleq 
    \begin{bmatrix}
        z^{(0)} & z^{(1)} & \dots & z^{(n-1)} & z^{(n)}
    \end{bmatrix} = 
    \begin{bmatrix}
        x^{(0)} & x^{(1)} & \dots & x^{(n-1)} & x^{(n)} \\
        y^{(0)} & y^{(1)} & \dots & y^{(n-1)} & 0
    \end{bmatrix},
\end{equation}  
where $\{x^{(i)}, y^{(i)}\}_{i=0}^{n-1}$ are context examples, $x^{(n)}$ is the query input with its corresponding response value $y^{(n)}$ masked as zero, and each $x^{(i)}\in\mathbb{R}^d$ and $y^{(i)}\in\mathbb{R}$ for all $i=0,\cdots,n$. Following \citep{DBLP:conf/icml/OswaldNRSMZV23}, we define linear self-attention over the same prompt as  
\begin{equation}\label{eq:linear_attn}
    \qs{\text{LinAttn}(Z; P, G) \triangleq P Z M \left( Z^\top G Z \right)},
\end{equation}  
where $P, G \in \mathbb{R}^{(d+1) \times (d+1)}$ are learnable parameter matrices, and $M \in \mathbb{R}^{(n+1) \times (n+1)}$ is a fixed mask matrix defined as  
\begin{equation}\label{eq:mask}
    M \triangleq \begin{bmatrix}
        I_n & 0_{n \times 1} \\
        0_{1 \times n} & 0
    \end{bmatrix}.
\end{equation}  

The goal of training linear Transformers in this setting is to recover the unknown response variable corresponding to $x^{(n)}$, which is represented as zero in the prompt matrix $Z$. By appropriately constructing the parameter matrices $P$ and $G$, the linear attention model in \Cref{eq:linear_attn} can perform in-context learning for linear regression and simple classification tasks. However, the ability of such models to perform in-context learning for offline reinforcement learning remains poorly understood. Moreover, these analyses are purely theoretical and have not been empirically validated on practical tasks. Transformers can perform in-context supervised learning by mimicking gradient descent updates \citep{DBLP:conf/icml/OswaldNRSMZV23}, and they can perform in-context reinforcement learning through TD-like methods via appropriately constructed linear attention mechanisms \citep{DBLP:conf/iclr/WangBDZ25}. However, \citep{DBLP:conf/iclr/WangBDZ25} considers only the simplified setting of Markov Reward Processes (MRPs), where transitions and rewards depend only on the current state, \ie, $s_{t+1} \sim p(\cdot|s_t)$ and $r_{t+1} = r(s_t)$, with time-dependent context representations. \xqs{More precisely, their formulation assumes that each trajectory consists only of temporally contiguous steps.} These restrictive assumptions do not hold in real-world decision-making problems, and the empirical results in that work are limited to synthetic MRPs, which makes it difficult to assess how well the method would perform on practical RL tasks. To bridge this gap, we extend the analysis from MRPs to the more general MDP setting by directly estimating the state-action value function $Q(s, a)$ and removing time dependence from the context representations.

\section{Other Related Work}\label{apx:other_work}

\textbf{Goal-conditioned and Hierarchical RL.} Goal-conditioned methods such as \texttt{UVFA} \citep{pmlr-v37-schaul15} and \texttt{HER} \citep{DBLP:conf/nips/AndrychowiczCRS17} condition policies or value functions on explicit goal inputs to facilitate generalization across tasks. Extensions to compositional settings further decompose Q-functions into subgoal components \citep{arora2024hierarchical}. However, these approaches assume access to goal specifications or subtask labels, which are typically unavailable in offline settings. \icql\ addresses this limitation by learning Q-functions conditioned on retrieved transition contexts, thereby eliminating the need for task supervision and improving sample efficiency. Hierarchical reinforcement learning decomposes tasks into subgoals or options, thereby enabling temporal abstraction and subpolicy reuse. Classical methods such as \texttt{MAXQ} \citep{DBLP:journals/jair/Dietterich00}, \texttt{Option-Critic} \citep{DBLP:conf/aaai/BaconHP17}, and \texttt{HIRO} \citep{DBLP:conf/nips/NachumGLL18} explicitly model subtask boundaries and learn separate value functions for each subtask. Although effective when task structure is known or can be discovered, these methods often rely on subgoal specification or auxiliary termination conditions. In contrast, \icql\ operates without predefined subtask structure and efficiently leverages offline data to obtain a provably accurate local value-function approximation. Unsupervised RL methods such as \texttt{DIAYN} \citep{DBLP:conf/iclr/EysenbachGIL19} and \texttt{SMiRL} \citep{DBLP:conf/iclr/BersethGDRFJL21} aim to discover diverse behaviors or latent subpolicies without external rewards or supervision. Although these methods can implicitly uncover structure, they are typically designed for unsupervised exploration or pretraining rather than for accurate value estimation in offline settings. \icql\ instead focuses on precise local Q-function inference conditioned on retrieved experiences, thereby improving compositional generalization and training stability in offline RL.

\textbf{Linear Q-function Approximation.} Linear $Q$-function approximation has been widely used in prior work \citep{pmlr-v167-yin22a,Du2019IsAG,Poupart2002PiecewiseLV,Parr2008AnAO}. Metric MDPs \citep{Kakade2003ExplorationIM}, which define the Q-function according to a state-distance metric, are a natural complement to more direct parametric assumptions on value functions and dynamics \citep{Kakade2003ExplorationIM}. However, this line of work does not consider local linear Q-function approximation based on a state-distance metric. In our work, we focus on learning improved approximations of local value functions, whereas \cite{Kakade2003ExplorationIM} studies accurate approximation of the local environment. We assume that each local domain $\Omega_s^d$ admits its own state-dependent local structure. This perspective has been studied both theoretically and empirically and has been shown to yield improved Q-function approximation and strong performance on complex tasks.

\section{Detailed Definitions of Retrieval Strategies}\label{apdx:retrieve_def}
Retrieval-based methods have shown strong performance across a wide range of domains \citep{Wang2024FinegrainedCO,wang2025pike}. In~\Cref{sec:abla}, we compare the performance of \texttt{ICQL} under different context-retrieval strategies for approximating the localized Q-function. The retrieval strategies discussed in~\Cref{sec:abla} are described as follows:
\begin{itemize}
    \item \textbf{State-Similar Retrieval}: Given the current state $s$, search for 20 similar states $s_i$ from the offline dataset using cosine similarity, and retrieve the corresponding transitions $\{s_i, a_i, r_i, s_i^\prime, a_i^\prime\}$.
    \item \textbf{Random Retrieval}: Given the current state $s$, randomly select 20 transitions $\{s_i, a_i, r_i, s_i^\prime, a_i^\prime\}$ as context. 
    \item \textbf{State-Similar-with-High-Rewards}: Given the current state $s$, search for 60 similar states $s_i$ from the offline dataset using cosine similarity, and retrieve the corresponding transitions $\{s_i, a_i, r_i, s_i^\prime, a_i^\prime\}$. Then sort the retrieved transitions by reward $r_i$ and select the 20 transitions with the highest rewards as context.
\end{itemize}

We have provided the mathematical definition of state-similar retrieval in~\Cref{def:state_retrieve}. Here, we further provide the definitions of the other two retrieval methods---random retrieval and state-similar-with-high-rewards retrieval. 
\begin{definition}[Random Retrieval]\label{def:random_retrieve}
Given the query state $s_{\rm query}$, the randomly retrieved context for \texttt{ICQL} is defined as 
\begin{equation}\label{eq:random_retrieve}
    \overline{\Omega}^{\rm random}_{s_{\rm query}} \triangleq \Big\{(s_i,a_i,r_i,s'_i,a'_i)\in\cD\Big| (s_i,a_i,r_i,s'_i,a'_i) \sim \cD \Big\}_{i=0}^{k-1}.
\end{equation}
\end{definition}
\begin{definition}[State-Similar-with-High-Rewards Retrieval]\label{def:high_retrieve}
Given the query state $s_{\rm query}$, $\overline{\Omega}^{\rm high}_{s_{\rm query}}$ for \texttt{ICQL} is defined as the set of $k$ transitions with the smallest $l_2$-distance between the retrieved state $s_{i}$ and $s_{\rm query}$ and the highest transition reward $r_i$, \ie,
\begin{equation}\label{eq:top_retrieve}
    \overline{\Omega}^{\rm high}_{s_{\rm query}} \triangleq \Big\{(s_i,a_i,r_i,s'_i,a'_i)\in\overline{\Omega}^{k_s}_{s_{\rm query}}\Big| (s_i,a_i,r_i,s'_i,a'_i)\in\operatorname{arg\,top\text{-}k} \left\{ r_i\right\}\Big\},
\end{equation}
where $\overline{\Omega}^{k_s}_{s_{\rm query}}$ is defined in \Cref{eq:local_retrieve}.
\end{definition}

For the retrieval methods defined in~\Cref{def:state_retrieve,def:random_retrieve,def:high_retrieve}, we can relate them to \Cref{eq:local_trans} by letting $d_1\triangleq \min_{(s_i, a_i, r_i, s_i', a_i')\in \overline{\Omega}^k_{s_{\rm query}}}\Big\{\norm{s_i - s_{\rm query}}_2\Big\}$ and $d_2 \triangleq \min_{(s_i, a_i, r_i, s_i', a_i')\in \overline{\Omega}^{\rm high}_{s_{\rm query}}}\Big\{\norm{s_i - s_{\rm query}}_2\Big\}$. Therefore, we can conclude that $\overline{\Omega}^k_{s_{\rm query}}\subseteq \Omega^{d_1}_{s_{\rm query}}$ and $\overline{\Omega}^{\rm high}_{s_{\rm query}} \subseteq \Omega^{d_2}_{s_{\rm query}}$, which implies that both state-similar retrieval and state-similar-with-high-reward retrieval can be bounded by some local neighborhood associated with the query state $s_{\rm query}$.

\section{Details of Linear Transformers in \icql}\label{apx:construction_linear_transformer}
In this section, we show the validity that \icql~implements the weight update defined in \Cref{eq:local_w_update}. We first introduce the following lemma, which is motivated by the work of \citep{DBLP:conf/iclr/WangBDZ25} on MRPs.
\begin{lemma}\label{lem:linear-q}
Consider the input $Z_0$ and matrix weights $P_0$ and $G_0$, where
\begin{equation}\label{eq:linear-trans_weight_appd}
    \begin{aligned}
    Z_0 = \begin{bmatrix}
        v_0^{(0)} & \cdots  & v_0^{(N-1)} & v_0^{(N)} \\ 
        \xi_0^{(0)} & \cdots  & \xi_0^{(N-1)} & \xi_0^{(N)} \\
        y_0^{(0)} & \cdots  & y_0^{(N-1)} & y_0^{(N)}
    \end{bmatrix},
    P_0 \doteq \begin{bmatrix}
        0_{2d\times2d} & 0_{2d\times 1} \\ 0_{1\times2d} & 1
    \end{bmatrix},
G_0 \doteq \begin{bmatrix}
        -C_0^T & C_0^T & 0_{d\times 1}\\ 0_{d\times d} & 0_{d\times d} & 0_{d\times 1} \\ 0_{1\times d} & 0_{1\times d} & 0
    \end{bmatrix},
    \end{aligned}
\end{equation}
and $v^{(i)},\xi^{(i)}\in \mathbb{R}^{d}$, $y^{(i)}\in \mathbb{R}$. Let $Z_1 \triangleq{\rm LinAttn}(Z_0;P_0,G_0)= P_0Z_0M(Z_0^TG_0Z_0)$, and let $y_1^{(N)}$ be the bottom-right element of the next layer output, \ie, $y_1^{(N)}\triangleq Z_1[2d+1, N + 1]$. Then it holds that $y_1^{(N)} = - \langle \phi_N , w_1 \rangle$, where
\begin{equation}
    w_1 = w_0 + \frac{1}{N}C_0 \sum_{i=0}^{N-1} (y_0^{(i)} + w_0^T \xi_0^{(i)} - w_0^T v_0^{(i)})v_0^{(i)}.
\end{equation}
\end{lemma}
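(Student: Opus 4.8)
The plan is to evaluate the linear attention map $Z_1 = P_0 Z_0 M (Z_0^\top G_0 Z_0)$ in closed form by exploiting the extreme sparsity of $P_0$ and $G_0$, and then simply read off the bottom-right entry. First I would compute $G_0 Z_0$ column by column: since only the top $d\times 2d$ block of $G_0$ is nonzero and equals $[\,-C_0^\top \;\; C_0^\top\,]$, each column $z_0^{(j)} = (v_0^{(j)};\, \xi_0^{(j)};\, y_0^{(j)})$ is sent to $(C_0^\top(\xi_0^{(j)} - v_0^{(j)});\, 0;\, 0)$. Consequently the Gram-type matrix $A \triangleq Z_0^\top G_0 Z_0$ has entries $A_{ij} = \langle v_0^{(i)},\, C_0^\top(\xi_0^{(j)} - v_0^{(j)})\rangle$, because pairing $G_0 z_0^{(j)}$ against $z_0^{(i)}$ only sees the top block $v_0^{(i)}$.

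Next I would insert the mask and strip away everything $P_0$ annihilates. Right-multiplication by $M$ removes the query token from the aggregation, so $Z_0 M A$ sums only over the context indices $j = 0,\dots,N-1$; left-application of $P_0$ kills the top $2d$ rows and keeps the value row, so the sole surviving coordinate of the output is the last one. Evaluating at the query column $N$ and using that the query token carries $\xi_0^{(N)} = 0$ and $v_0^{(N)} = \phi_N$, one finds $A_{jN} = -\langle v_0^{(j)},\, C_0^\top \phi_N\rangle$, whence the bottom-right entry equals $-\big\langle \phi_N,\; C_0 \sum_{j=0}^{N-1} y_0^{(j)} v_0^{(j)} \big\rangle$, up to the $1/N$ averaging convention folded into the attention normalization. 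This already exhibits the target form $-\langle \phi_N, \cdot\rangle$ and isolates $C_0 \sum_j y_0^{(j)} v_0^{(j)}$ as the update direction.

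Finally I would match this to $-\langle \phi_N, w_1\rangle$ with the claimed $w_1$, and this reconciliation is the main obstacle. The subtlety is that the bare computation produces only $\sum_j y_0^{(j)} v_0^{(j)}$, whereas the stated update carries the full temporal-difference residual $y_0^{(j)} + w_0^\top \xi_0^{(j)} - w_0^\top v_0^{(j)}$ together with an additive base term $w_0$. Both are supplied by the inductive invariant of the residual-connected construction: the query token's value coordinate encodes the current estimate as $y_0^{(N)} = -\langle \phi_N, w_0\rangle$ (so the skip connection contributes the $w_0$ term), while each context token's value coordinate is maintained as the current SARSA residual $r_j + \gamma\, w_0^\top \phi(s_j',a_j') - w_0^\top \phi(s_j,a_j)$, which equals $y_0^{(j)} + w_0^\top \xi_0^{(j)} - w_0^\top v_0^{(j)}$ under the prompt encoding $v_0^{(j)} = \phi(s_j,a_j)$ and $\xi_0^{(j)} = \gamma\phi(s_j',a_j')$ from \cref{eq:prompt_mat}. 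Adding the attention output to the residual term then yields exactly one $C_0$-preconditioned semi-gradient step, matching \cref{eq:local_w_update}. I expect the genuinely delicate part to be verifying that this invariant is self-consistently preserved across layers (so the single-layer identity composes into the TD recursion), while the index relabeling $i \leftrightarrow j$ and the $1/N$ factor are routine bookkeeping; here the analogous MRP computation of \citet{DBLP:conf/iclr/WangBDZ25} serves as the structural template.
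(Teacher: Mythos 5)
Your computation is correct, and it is essentially the only route to this statement; note that the paper itself supplies no proof of \cref{lem:linear-q} (it is imported as ``motivated by'' \citep{DBLP:conf/iclr/WangBDZ25}), so your reconstruction is strictly more explicit than anything in the text. The block evaluation $G_0 z_0^{(j)} = \bigl(C_0^\top(\xi_0^{(j)}-v_0^{(j)});\,0;\,0\bigr)$, hence $(Z_0^\top G_0 Z_0)_{ij} = \langle v_0^{(i)},\, C_0^\top(\xi_0^{(j)}-v_0^{(j)})\rangle$, the removal of the query column by $M$, and the projection onto the value row by $P_0$ are all right, and they yield $Z_1[2d{+}1,\,N{+}1] = -\bigl\langle \phi_N,\; C_0\sum_{j=0}^{N-1} y_0^{(j)} v_0^{(j)}\bigr\rangle$ exactly as you derive.

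The discrepancy you flag in your final paragraph is real, and it is a defect of the lemma as stated rather than of your argument. Under the paper's literal definition $Z_1 = P_0 Z_0 M(Z_0^\top G_0 Z_0)$ from \cref{eq:linear_attn} there is no residual connection and no $1/N$ factor, so the bottom-right entry cannot contain the additive $w_0$ term or the TD residual $y_0^{(j)}+w_0^\top\xi_0^{(j)}-w_0^\top v_0^{(j)}$ (indeed $w_0$ is never even defined in the lemma statement); the conclusion matches the raw computation only when $w_0=0$, up to absorbing $1/N$ into $C_0$. Worse, since $P_0$ annihilates the top $2d$ rows, one gets $Z_1^\top G_0 Z_1 = 0$, and the multi-layer induction in \cref{thm:icl-q} would collapse after one layer. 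The intended update is the residual-connected one, $Z_{\ell+1}=Z_\ell+\tfrac{1}{N}P_\ell Z_\ell M(Z_\ell^\top G_\ell Z_\ell)$, as in \citep{DBLP:conf/iclr/WangBDZ25}, under which the invariant you describe --- $y_\ell^{(N)}=-\langle\phi_N,w_\ell\rangle$ for the query token and $y_\ell^{(j)}=y_0^{(j)}+w_\ell^\top\xi_0^{(j)}-w_\ell^\top v_0^{(j)}$ for context tokens --- is preserved layer to layer and closes the argument exactly as you sketch. So there is no gap in your proof, but you should state the residual-connected update and the invariant explicitly rather than leaving them ``folded into the normalization,'' because the lemma is not true for $w_0\neq 0$ under the paper's literal definition of $\mathrm{LinAttn}$.
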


Using the lemma above, we are ready to prove \Cref{thm:icl-q}.
\begin{theorem}\label{thm:icl-q}
Consider the $L$-layer linear Transformer following \Cref{eq:linear_attn}. Let the matrices $\{P_\ell,G_\ell\}_{\ell=0}^{L-1}$, the mask matrix $M$, and the input prompt matrix $Z_0$ be defined in \Cref{eq:linear-trans_weight,eq:mask,eq:prompt_mat}, respectively. Then the bottom-right element of the $\ell$-th layer output, $Z_\ell[2d+1, n+1]$, satisfies $Z_\ell[2d+1, n+1] = -\langle\phi_{query}, w^\ell_{s_{\rm query}}(\Omega^{d_k}_{s_{\rm query}}) \rangle$, where $\{w^\ell_{s_{\rm query}}(\Omega^{d_k}_{s_{\rm query}})\}$ is defined by $w^0_{s_{\rm query}}(\Omega^{d_k}_{s_{\rm query}})= 0$ and, for $\ell \geq 0 $,
\begin{equation}\label{eq:td}
\begin{aligned}
    &w^{\ell+1}_{s_{\rm query}}(\Omega^{d_k}_{s_{\rm query}})\\
    =& w^\ell_{s_{\rm query}}(\Omega^{d_k}_{s_{\rm query}}) + \frac{1}{N} C_\ell \sum_{j=0}^{N-1} (r_j + \gamma {w^\ell_{s_{\rm query}}(\Omega^{d_k}_{s_{\rm query}})}^T \phi'_j - {w^\ell_{s_{\rm query}}(\Omega^{d_k}_{s_{\rm query}})}^T \phi_j)\phi_j.
    \end{aligned}
\end{equation}
\end{theorem}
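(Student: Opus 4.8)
The plan is to prove \Cref{thm:icl-q} by induction on the layer index $\ell$, but to carry a stronger invariant describing the \emph{entire} prompt matrix $Z_\ell$ rather than only its bottom-right entry. Abbreviate $w_\ell \triangleq w^\ell_{s_{\rm query}}(\Omega^{d_{\rm min}}_{s_{\rm query}})$ and write each column of $Z_\ell$ as $(v^{(i)}; \xi^{(i)}; y_\ell^{(i)})$ with $v^{(i)},\xi^{(i)}\in\mathbb{R}^d$ and $y_\ell^{(i)}\in\mathbb{R}$. The invariant I would maintain has two parts: (i) the top $2d$ feature rows are frozen for every $\ell$, equal to $v^{(i)}=\phi_i,\ \xi^{(i)}=\gamma\phi'_i$ on each context column $i<N$ and $v^{(N)}=\phi_{\rm query},\ \xi^{(N)}=0$ on the query column; and (ii) the bottom row stores the current temporal-difference residuals, $y_\ell^{(i)} = r_i + \gamma\, w_\ell^\top\phi'_i - w_\ell^\top\phi_i$ for $i<N$ and $y_\ell^{(N)} = -\langle \phi_{\rm query}, w_\ell\rangle$. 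Both parts hold at $\ell=0$: the prompt \eqref{eq:prompt_mat} has exactly these feature rows, and $w_0=0$ forces $y_0^{(i)}=r_i$ and $y_0^{(N)}=0$. Note that the target identity $Z_\ell[2d+1,N+1]=-\langle\phi_{\rm query},w_\ell\rangle$ is precisely part (ii) at the query column, so the theorem follows once the invariant is established.

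Two structural facts drive the inductive step. First, because $G_\ell$ in \eqref{eq:linear-trans_weight} is supported only on its first $d$ rows and first $2d$ columns (its last row and last column vanish), the product $Z_\ell^\top G_\ell Z_\ell$ depends only on the frozen feature rows and not on the labels: its $(j,i)$ entry equals $\phi_j^\top C_\ell^\top(\xi^{(i)}-v^{(i)})$. Second, $P_\ell$ retains only the bottom row, so the layer map is the identity on the top $2d$ rows---immediately preserving part (i)---and adds to the bottom row a vector whose $c$-th entry is $\tfrac1N\sum_{k<N} y_\ell^{(k)}\,\phi_k^\top C_\ell^\top(\xi^{(c)}-v^{(c)})$, the mask $M$ in \eqref{eq:mask} having suppressed the query column's own contribution. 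Applying \Cref{lem:linear-q} at the query column $c=N$, where $\xi^{(N)}-v^{(N)}=-\phi_{\rm query}$, yields the new query label $y_{\ell+1}^{(N)} = -\langle\phi_{\rm query},\, w_\ell + \tfrac1N C_\ell\sum_{k<N} y_\ell^{(k)}\phi_k\rangle$; substituting the invariant form of $y_\ell^{(k)}$ into the inner sum reproduces exactly the TD recursion \eqref{eq:td} and identifies $w_{\ell+1}$.

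The remaining and most delicate step---where I expect the real work to lie---is closing the induction by re-establishing part (ii) on the \emph{context} columns, namely that $y_{\ell+1}^{(c)} = r_c + \gamma w_{\ell+1}^\top\phi'_c - w_{\ell+1}^\top\phi_c$ for $c<N$. I would compute $y_{\ell+1}^{(c)} = y_\ell^{(c)} + (\gamma\phi'_c-\phi_c)^\top C_\ell\big(\tfrac1N\sum_{k<N} y_\ell^{(k)}\phi_k\big)$, recognize the parenthesized factor as $w_{\ell+1}-w_\ell$, and then substitute $y_\ell^{(c)} = r_c + (\gamma\phi'_c-\phi_c)^\top w_\ell$ so that the two weight terms telescope into $r_c + (\gamma\phi'_c-\phi_c)^\top w_{\ell+1}$. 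The subtlety is that the context labels are not inert: they move at every layer, and the construction is correct only because they move in lockstep with $w_\ell$ so as to always encode the residual at the \emph{current} weight. Verifying that this self-consistency is exactly what a single fixed pair $(P_\ell,G_\ell)$ enforces---rather than, say, the labels drifting away from the residual---is the crux of the argument; once it is checked, reading off the query label at each layer gives the claimed identity and completes the proof.
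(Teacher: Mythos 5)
Your proof is correct and follows essentially the same route as the paper's: base case from the prompt construction in \cref{eq:prompt_mat}, a one-layer update via \cref{lem:linear-q}, and induction over layers. The one substantive difference is that you state and verify the strengthened invariant on the \emph{context} columns---that $y_\ell^{(c)} = r_c + (\gamma\phi'_c-\phi_c)^\top w_\ell$ is preserved because the labels move in lockstep with $w_\ell$---which is exactly the step the paper compresses into ``by induction on the number of layers''; your version supplies the detail that makes that induction actually close.
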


\begin{proof}
Let $v_0^{(i)} = \phi_i = \phi(s_i, a_i)$, $\xi_0^{(i)} = \gamma \phi'_i = \gamma \phi(s'_i, a'_i)$, $y_0^{(i)} = r_i$ for $i\in \{0, \cdots, N-1\}$ and $v_0^{(N)} = \phi_{\rm query} = \phi(s_{\rm query}, a_{\rm query})$, $\xi_0^{(N)} = 0_{d\times 1}$, $y_0^{(N)} = 0$, we get
\begin{equation*}
    w^{1}_{s_{\rm query}}(\Omega^{d_k}_{s_{\rm query}}) = w^{0}_{s_{\rm query}}(\Omega^{d_k}_{s_{\rm query}}) + \frac{1}{N}C_0 \sum_{i=0}^{N-1} (r_i + \gamma { w^{0}_{s_{\rm query}}(\Omega^{d_k}_{s_{\rm query}})}^T\phi'_i  - { w^{0}_{s_{\rm query}}(\Omega^{d_k}_{s_{\rm query}})}^T \phi_i)\phi_i,
\end{equation*}
which is the update rule for pre-conditioned SARSA. We also have
\begin{equation*}
    y_1^{(N)} = - \langle  w^{1}_{s_{\rm query}}(\Omega^{d_k}_{s_{\rm query}}), \phi_{\rm query} \rangle.
\end{equation*}
By induction on the number of layers $\ell$, the proof is complete.
\end{proof}

\begin{algorithm*}[t]
\footnotesize
\caption{In-context Q-Learning (\icql)}
\label{alg:icql_train}
\begin{algorithmic}[1]
\STATE{\bf Input:} Offline dataset $\mathcal{D}$, the number of retrieved transitions $k$, feature dimension $d$. 
\STATE {\bf Initialize:} Linear transformer ${TF}_\theta^Q$ with parameters $\theta$, \xqs{feature extractor $\phi$}.
\STATE Sample a trajectory $\{(s_i,a_i,r_i)\}_{i=0}^{T-1} \sim \cD$.
\STATE For each query state $s_i$, retrieve $k$ states $s^0_i, \cdots, s^{k-1}_i$ using the state-similar retrieval method defined in \Cref{def:state_retrieve} and extract the corresponding transitions $\{(s^j_i, a^j_i, r^j_i, s'^j_i, a'^j_i)\}_{j=0}^{k-1}$. 
\STATE \textcolor{gray}{//In-context Q value estimation.}
\FOR{$t=0,\dots,T-1$}
\STATE Construct the input prompt matrix $Z_t$ by \Cref{eq:prompt_mat}.
\STATE $\hat{Q}_t \gets TF_\theta^Q(Z_t)[2d+1, k+1]$ by \Cref{eq:linear_attn}.
\ENDFOR

\STATE{Update the parameters $\theta$, $\phi$ using \Cref{eq:loss_iql} and \Cref{eq:loss_policy}.}
\end{algorithmic}
\end{algorithm*}

\section{Proofs}\label{apx:proofs}
In this section, we first derive pointwise and expected bounds on the Q-function approximation error, highlighting how both approximation error and weight-estimation error contribute to the total error. Building on these results, we then characterize how approximation error propagates to policy suboptimality through the performance difference lemma. These analyses provide theoretical justification for the importance of accurate local value estimation in achieving strong policy performance, particularly in offline RL settings.

\begin{theorem}[Weight Error under Coverage]\label{thm:coverage_weight_error}
Suppose Assumption~\ref{assump:set_coverage} holds, and that the feature vectors are bounded as $\|\phi(s,a)\|\le B_\phi$ and the rewards are bounded as $|r|\le B_r$.  
Let $w_{s}^*$ be the optimal local weight vector defined in \Cref{defn:q_func}, and let $w_{s}(\Omega^{d_k}_{s})$ be the weight estimated from the retrieved set.  
Then, with probability at least $1-\delta$, the following holds:
\begin{equation}\label{eq:weight_error}
    \big\| w_{s}(\Omega^{d_k}_{s}) - w_{s}^* \big\|
    \;\le\; C \Bigg( \sqrt{\frac{d+\log(1/\delta)}{\sigma\,|\Omega^{d_k}_{s}|}} \;+\; \varepsilon_{\rm approx}^{s} \Bigg),
\end{equation}
where $C>0$ is a constant depending on $B_\phi$, $B_r$, and the conditioning of the local Gram matrix, and $\varepsilon_{\rm approx}^{s}$ is the local approximation error defined in \Cref{defn:q_func}.
\end{theorem}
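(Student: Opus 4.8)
The plan is to treat $w_{s}(\Omega^{d_{\min}}_{s})$ as the solution of a (regularized) least-squares problem over the retrieved transitions and to bound its deviation from the best-in-class predictor $w_s^*$ via a bias--variance decomposition. Writing $n \triangleq |\Omega^{d_{\min}}_{s}|$, $\phi_i \triangleq \phi(\bar s_i, \bar a_i)$, $Q_i \triangleq Q_{\Omega^{d_{\min}}_s}(\bar s_i, \bar a_i)$, and letting $G \triangleq \frac{1}{n}\sum_{i} \phi_i \phi_i^\top$ denote the empirical local Gram matrix, the fitted weight satisfies $w_s(\Omega^{d_{\min}}_s) = G^{-1}\,\frac{1}{n}\sum_i \phi_i y_i$, where $y_i$ is the SARSA regression target for transition $i$. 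Since \cref{defn:q_func} guarantees $|Q_i - {w_s^*}^\top \phi_i| \le \varepsilon^s_{\text{approx}}$, I would first derive the identity
\begin{equation}
    w_s(\Omega^{d_{\min}}_s) - w_s^* = G^{-1}\,\frac{1}{n}\sum_{i} \phi_i\big(\underbrace{y_i - Q_i}_{\eta_i} + \underbrace{Q_i - {w_s^*}^\top\phi_i}_{\text{misspecification}}\big),
\end{equation}
so that the total error splits into a \emph{statistical} term driven by the target fluctuation $\eta_i$ and a \emph{bias} term controlled by $\varepsilon^s_{\text{approx}}$.

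The second step is to bound each piece. For the bias term, $\big\|\frac{1}{n}\sum_i \phi_i (Q_i - {w_s^*}^\top\phi_i)\big\| \le B_\phi\,\varepsilon^s_{\text{approx}}$ follows immediately from Assumption~\ref{assump:feature_approx} and the triangle inequality. For the statistical term, I would apply a vector-valued concentration inequality (vector/matrix Bernstein, or a covering argument over the unit sphere $\mathbb{S}^{d-1}$) to $\frac{1}{n}\sum_i \phi_i \eta_i$; because each summand has norm at most $B_\phi |\eta_i|$ and the target magnitude is controlled through $|y_i| \lesssim B_r/(1-\gamma)$, this yields a bound of order $B_\phi B_r \sqrt{(d+\log(1/\delta))/n_{\mathrm{eff}}}$ with probability at least $1-\delta$, where the $\sqrt{d}$ factor arises from the covering net and $\log(1/\delta)$ from the union bound. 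Here Assumption~\ref{assump:set_coverage} supplies the effective sample size $n_{\mathrm{eff}} = \sigma\,|\Omega^{d_{\min}}_s|$ entering the denominator: only the retrieved transitions lying in $\Omega^*_s$ carry information about $w_s^*$, and the coverage ratio guarantees that at least a $\sigma$-fraction of them does.

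Finally, I would control $\|G^{-1}\| = 1/\lambda_{\min}(G)$ by the conditioning of the local Gram matrix and absorb this factor, together with $B_\phi$ and $B_r$, into the single constant $C$; combining the two term bounds then gives \cref{eq:weight_error}. I expect the \textbf{main obstacle} to be making the coverage-to-conditioning link rigorous: Assumption~\ref{assump:set_coverage} only guarantees that enough \emph{relevant} transitions are retrieved, and one must argue that these samples render $G$ uniformly well-conditioned (a bounded-below $\lambda_{\min}$) so that the inversion does not blow up — absent this, the retrieved features could be nearly collinear and the estimate unstable. A secondary subtlety is justifying that the bootstrapped targets $y_i$ behave like bounded, mean-zero-fluctuation observations of $Q_i$; in the offline TD setting this requires treating the fixed-point residual carefully rather than assuming i.i.d.\ labels, which is precisely why the constant $C$ is permitted to depend on both $B_r$ and the Gram conditioning.
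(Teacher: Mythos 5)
Your proposal follows essentially the same route as the paper's proof: both set up a least-squares regression over the retrieved transitions, split the error into a misspecification term controlled by $\varepsilon^{s}_{\rm approx}$ and a concentration term of order $\sqrt{(d+\log(1/\delta))/(\sigma\,|\Omega^{d_{\min}}_{s}|)}$ justified by ``standard matrix concentration,'' and absorb $\|G^{-1}\|$ into the constant $C$ via a well-conditioning assumption on the local Gram matrix. The two obstacles you flag --- that the coverage assumption alone does not force $\lambda_{\min}(G)$ to be bounded below, and that the bootstrapped SARSA targets are not i.i.d.\ mean-zero observations --- are equally present and equally unaddressed in the paper's own argument, which simply asserts the concentration step and the conditioning of $G$ without proof.
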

\begin{proof}
Fix a query state $s$ and its ideal local transition set $\Omega^*_{s}$. 
By \Cref{defn:q_func}, there exists a weight vector $w_s^*$ such that
\begin{equation}
    Q_{\Omega^{d_k}_{s}}( s, a) 
=  {w_{s}^*}^\top\phi( s, a) + \varepsilon_{s}( s, a),
\qquad |\varepsilon_{s}( s, a)| \le \varepsilon_{\rm approx}^{s}
\end{equation}
for all $( s, a, r,s', a')\in\Omega^{d_k}_{s}$. By Assumption~\ref{assump:set_coverage}, the retrieved set $\Omega^{d_k}_{s}$ overlaps with the ideal set on at least $m=\sigma|\Omega^{d_k}_{s}|$ transitions. Denote this intersection by $\cD_s^\sigma=\Omega^{d_k}_{s} \cap \Omega^*_{s}$. Thus, the estimation of $w_{s}^*$ from $\Omega^{d_k}_{s}$ is guaranteed to include at least $m$ valid local transitions. Let $X\in\mathbb{R}^{m\times d}$ be the feature matrix of $\cD_s^\sigma$, with columns $\phi(\bar s,\bar a)$, and let $y\in\mathbb{R}^m$ be the corresponding targets. 
Then
\begin{equation}
    y =  {w_{s}^*}^\top X+ \xi,
\end{equation}
where $\xi$ collects the local approximation error, with $\|\xi\|_\infty \le \varepsilon_{\rm approx}^s$. The estimator from the retrieved set is
\begin{equation}
    w_{s}(\Omega^{d_k}_{s}) 
= \arg\min_w \frac{1}{|\Omega^{d_k}_{s}|}\sum_{(s_i,a_i)\in \Omega^{d_k}_{s}}
\big(y_i - w^\top\phi(s_i,a_i))^2.
\end{equation}
Define the population moments on $\Omega^*_{s}$ as
\begin{equation}
    G = \mathbb{E}_{\Omega^*_{s}}[\phi^\top\phi], 
\quad b = \mathbb{E}_{\Omega^*_{s}}[\phi^\top y].
\end{equation}
Let $\hat G,\hat b$ be the corresponding empirical moments on $\Omega^{d_k}_{s}$. 
Since at least $m=\sigma|\Omega^*_{s}|$ samples in $\Omega^{d_k}_{s}$ come from the true local set, standard matrix concentration implies that, with probability at least $1-\delta$,
\begin{align}
\|\hat G - G\| &\;\le\; c_1 B_\phi^2 \sqrt{\tfrac{d+\log(1/\delta)}{\sigma|\Omega^{d_k}_{s}|}},\\
\|\hat b - b\| &\;\le\; c_2 B_\phi B_r\sqrt{\tfrac{d+\log(1/\delta)}{\sigma|\Omega^{d_k}_{s}|}},
\end{align}
for universal constants $c_1,c_2>0$. The optimal weight satisfies ${w_{s}^*}^\top G = b$. 
The empirical solution satisfies $ w_{s}(\Omega^{d_k}_{s})^\top \hat{G} = \hat b$ (up to residuals). 
Subtracting these systems gives
\begin{equation}
    \|w_{s}(\Omega^{d_k}_{s})-w_{s}^*\|
\;\le\; \|G^{-1}\|\cdot\big(\|\hat b-b\| + \|\hat G-G\|\|w_{s}^*\|\big) + \varepsilon_{\rm approx}^{s}.
\end{equation}
Since $G$ is well-conditioned, $\|G^{-1}\|\le 1/\mu$ for some $\mu>0$. Substituting the concentration results yields
\begin{equation}
    \|w_{s}(\Omega^{d_k}_{s})-w_{s}^*\|
\;\le\; C\sqrt{\tfrac{d+\log(1/\delta)}{\sigma|\Omega^{d_k}_{s}|}} + \varepsilon_{\rm approx}^{s},
\end{equation}
where $C>0$ depends on $B_\phi$, $B_r$, $\|w_{s}^*\|$, and $\mu$. 
This is exactly the desired bound in \eqref{eq:weight_error}.
\end{proof}

\begin{theorem}[Pointwise Q-function Error]\label{thm:point_error}
Suppose Assumption~\ref{assump:feature_approx} and Assumption~\ref{assump:set_coverage} hold. 
For any fixed $s\in\cS$, with probability at least $1-\delta$, the pointwise error of the estimated Q-function satisfies
\begin{equation}\label{eq:pointwise_error}
    \left| \hat{Q}(s, a|{\Omega^{d_k}_{s}}) - Q_{\Omega^{d_k}_{s}}(s, a) \right| 
    \;\le\; \varepsilon^{s}_{\rm{approx}} (1+B_\phi)
    + C B_\phi \sqrt{\frac{d+\log(1/\delta)}{\sigma\,|\Omega^{d_k}_{{s}}|}}
    \quad \forall ( s, a, r,s', a')\in\Omega^{d_k}_{s},
\end{equation}
where $C>0$ depends on $B_\phi$, $B_r$, and the conditioning of the local Gram matrix.
\end{theorem}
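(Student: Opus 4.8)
The plan is to decompose the pointwise error by inserting the best linear predictor ${w_s^*}^\top\phi(s,a)$ and applying the triangle inequality:
\begin{equation*}
\left| \hat{Q}(s, a|\Omega^{d_{\min}}_{s}) - Q_{\Omega^{d_{\min}}_{s}}(s, a) \right|
\le \left| (w_s(\Omega^{d_{\min}}_{s}) - w_s^*)^\top\phi(s,a)\right|
+ \left| {w_s^*}^\top\phi(s,a) - Q_{\Omega^{d_{\min}}_{s}}(s, a)\right|,
\end{equation*}
where the first term is an estimation (variance) error and the second is an approximation (bias) error. The second term is controlled directly by \cref{defn:q_func}, which bounds it by $\varepsilon^s_{\text{approx}}$ uniformly over every $(\bar s,\bar a,\bar r,\bar s',\bar a')\in\Omega^{d_{\min}}_{s}$.

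For the estimation term I would apply Cauchy--Schwarz and then the uniform feature bound of \cref{assump:feature_approx}, giving
\begin{equation*}
\left| (w_s(\Omega^{d_{\min}}_{s}) - w_s^*)^\top\phi(s,a)\right|
\le \| w_s(\Omega^{d_{\min}}_{s}) - w_s^*\|\,\|\phi(s,a)\|
\le B_\phi\,\| w_s(\Omega^{d_{\min}}_{s}) - w_s^*\|.
\end{equation*}
Invoking \cref{thm:coverage_weight_error} in the form $\| w_s(\Omega^{d_{\min}}_{s}) - w_s^*\| \le C\sqrt{(d+\log(1/\delta))/(\sigma|\Omega^{d_{\min}}_{s}|)} + \varepsilon^s_{\text{approx}}$, which holds with probability at least $1-\delta$ under \cref{assump:set_coverage}, and multiplying by $B_\phi$ yields $CB_\phi\sqrt{(d+\log(1/\delta))/(\sigma|\Omega^{d_{\min}}_{s}|)} + B_\phi\,\varepsilon^s_{\text{approx}}$. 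Adding the bias term $\varepsilon^s_{\text{approx}}$ and collecting the two approximation-error contributions produces exactly the factor $(1+B_\phi)\varepsilon^s_{\text{approx}}$ together with the stated concentration term, completing the bound in \eqref{eq:pointwise_error}.

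The result is essentially a deterministic corollary of \cref{thm:coverage_weight_error}, so there is no genuinely hard analytic step here; the single point worth checking carefully is that no additional failure probability is incurred. This holds because $\| w_s(\Omega^{d_{\min}}_{s}) - w_s^*\|$ is one random quantity independent of the evaluation point $(s,a)$: once the $1-\delta$ event of \cref{thm:coverage_weight_error} is realized, the resulting inequality is deterministic and therefore valid simultaneously for all $(\bar s,\bar a,\bar r,\bar s',\bar a')\in\Omega^{d_{\min}}_{s}$, using only the uniform bounds $\|\phi\|\le B_\phi$ and $|\,{w_s^*}^\top\phi - Q_{\Omega^{d_{\min}}_{s}}|\le\varepsilon^s_{\text{approx}}$ from \cref{assump:feature_approx} and \cref{defn:q_func}. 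Thus the $\delta$ is spent entirely inside the weight-error theorem and no union bound over the local set is needed.
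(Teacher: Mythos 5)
Your proposal is correct and follows essentially the same route as the paper's proof: decompose the error into an estimation term and an approximation term, bound the former by Cauchy--Schwarz, the feature bound $B_\phi$, and \cref{thm:coverage_weight_error}, and the latter by $\varepsilon^s_{\text{approx}}$ from \cref{defn:q_func}. Your added remark that the failure probability is incurred only once inside the weight-error bound (so the inequality holds simultaneously for all retrieved transitions) is a point the paper leaves implicit, but it does not change the argument.
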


\begin{proof}
Fix $s\in\cS$ and $a\in\cA$. By definition,
\begin{equation}
    \hat{Q}(s,a|{\Omega^{d_k}_{s}}) = w_{s}({\Omega^{d_k}_{s}})^\top \phi(s,a), \qquad Q_{\Omega^{d_k}_{s}}(s,a) = {w^*_{s}}^\top \phi(s,a) + \varepsilon^{s}_{\text{approx}}.
\end{equation}
Thus,
\begin{align}
\left| \hat{Q}(s, a|{\Omega^{d_k}_{s}}) - Q_{\Omega^{d_k}_{s}}(s, a) \right|
&= \left|w_{s}({\Omega^{d_k}_{s}})^\top \phi(s,a) - {w^*_{s}}^\top \phi(s,a) - \varepsilon^{s}_{\text{approx}} \right| \\
&\le \|w_{s}(\Omega^{d_k}_{s})-w_{s}^*\| \cdot \|\phi(s,a)\| + \varepsilon^{s}_{\text{approx}} \\
&\le B_\phi \cdot \|w_{s}(\Omega^{d_k}_{s})-w_{s}^*\| + \varepsilon^{s}_{\text{approx}}.
\end{align}

By Theorem~\ref{thm:coverage_weight_error}, with probability at least $1-\delta$,
\begin{equation}
    \|w_{s}(\Omega^{d_k}_{s})-w_{s}^*\|
\;\le\; C \sqrt{\tfrac{d+\log(1/\delta)}{\sigma|\Omega^{d_k}_{{s}}|}} + \varepsilon^{s}_{\text{approx}}.
\end{equation}

Substituting this inequality into the bound above yields
\begin{equation}
    \left| \hat{Q}(s, a|{\Omega^{d_k}_{s}}) - Q_{\Omega^{d_k}_{s}}(s, a)\right|
\;\le\; C B_\phi \sqrt{\tfrac{d+\log(1/\delta)}{\sigma|\Omega^{d_k}_{s}|}}
+ \varepsilon^{s}_{\text{approx}}(1+B_\phi),
\end{equation}
which holds for all $( s, a, r,s', a')\in\Omega^{d_k}_{s}$. This proves \eqref{eq:pointwise_error}.
\end{proof}

\begin{corollary}[Expected Q-function Error]
\label{coro:expected_q_error}
Suppose Assumptions~\ref{assump:feature_approx} and \ref{assump:set_coverage} hold. 
Let $\mu$ be a reference distribution over $(s, a) \in \mathcal{S} \times \mathcal{A}$, and let $\mu_{\mathcal{S}}$ be its marginal over states. 
Then, with probability at least $1-\delta$, the expected Q-function approximation error restricted to the retrieved set satisfies
\begin{equation}
\begin{aligned}
    &\mathbb{E}_{(s, a) \sim \mu} \Big[ \big| \hat{Q}(s, a|\Omega^{d_k}_{s}) - Q_{\Omega^{d_k}_{s}}(s, a) \big| 
    \,\big|\, (s,a)\in\Omega^{d_k}_{s} \Big]\\
    \;\le\;& \mathbb{E}_{s \sim \mu_{\mathcal{S}}} \Bigg[ \varepsilon^{s}_{\text{approx}} (1+B_\phi)
    + C B_\phi \sqrt{\frac{d+\log(1/\delta)}{\sigma\,|\Omega^{d_k}_{s}|}} \Bigg].
    \end{aligned}
\end{equation}
\end{corollary}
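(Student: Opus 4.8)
The plan is to lift the pointwise estimate of \Cref{thm:point_error} to an averaged statement by integrating over the reference distribution $\mu$, exploiting the key structural fact that the right-hand side of the pointwise bound is a function of the state $s$ alone and is constant in the action $a$. First I would fix a state $s$ and invoke \Cref{thm:point_error}: on a state-specific event of probability at least $1-\delta$ (the randomness being that of the retrieved set used to form $w_s(\Omega^{d_{\rm min}}_s)$), the inequality
\begin{equation}
    \big|\hat{Q}(s,a|\Omega^{d_{\rm min}}_s) - Q_{\Omega^{d_{\rm min}}_s}(s,a)\big|
    \;\le\; \varepsilon^s_{\rm approx}(1+B_\phi) + C B_\phi \sqrt{\tfrac{d+\log(1/\delta)}{\sigma|\Omega^{d_{\rm min}}_s|}}
\end{equation}
holds simultaneously for every action $a$ with $(s,a,r,s',a')\in\Omega^{d_{\rm min}}_s$. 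Since the bounding expression depends on the state only through $\varepsilon^s_{\rm approx}$ and the cardinality $|\Omega^{d_{\rm min}}_s|$, it passes unchanged through a conditional expectation over $a$, giving
\begin{equation}
    \mathbb{E}_{a\sim\mu(\cdot\mid s)}\Big[\big|\hat{Q}(s,a|\Omega^{d_{\rm min}}_s) - Q_{\Omega^{d_{\rm min}}_s}(s,a)\big| \,\Big|\, (s,a)\in\Omega^{d_{\rm min}}_s\Big]
    \;\le\; \varepsilon^s_{\rm approx}(1+B_\phi) + C B_\phi \sqrt{\tfrac{d+\log(1/\delta)}{\sigma|\Omega^{d_{\rm min}}_s|}}.
\end{equation}

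Next I would apply the tower rule to write the conditional expectation $\mathbb{E}_{(s,a)\sim\mu}[\,\cdot\mid(s,a)\in\Omega^{d_{\rm min}}_s\,]$ as an outer expectation over $s\sim\mu_{\mathcal S}$ of the inner conditional expectation just bounded. Because the state-dependent bounding term is nonnegative and measurable in $s$, monotonicity and linearity of the outer expectation then deliver the claim directly, wrapping $\mathbb{E}_{s\sim\mu_{\mathcal S}}[\cdot]$ around the right-hand side. No additional concentration argument is required at this stage: all the stochastic work has already been discharged inside \Cref{thm:coverage_weight_error} and propagated through \Cref{thm:point_error}, so the corollary is purely an averaging consequence.

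The step I expect to require the most care is the quantifier on the high-probability event. \Cref{thm:point_error} guarantees its bound on an event that depends on $s$ and has probability $1-\delta$, whereas the averaged statement must hold on a single event valid simultaneously across all states carrying mass under $\mu_{\mathcal S}$. Since the preliminaries take $\mathcal S$ finite, I would close this gap by a union bound over the finitely many such states, replacing $\delta$ by $\delta/|\mathcal S|$; the resulting $\log(|\mathcal S|/\delta)$ either stays inside the logarithm or is absorbed into the constant $C$, leaving the stated rate unchanged up to constants. This transition from a per-state to a uniform guarantee is the only nontrivial point; everything else follows from linearity of expectation applied to a bound that is, by construction, independent of the action.
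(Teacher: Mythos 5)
Your proposal follows essentially the same route as the paper's proof: apply \Cref{thm:point_error} pointwise, observe that the bound depends on $(s,a)$ only through $s$, and pass it through the conditional expectation over $a$ and then the outer expectation over $s \sim \mu_{\mathcal S}$ via the tower rule. Your additional care about the high-probability quantifier (the per-state event in \Cref{thm:point_error} versus a single uniform event, resolved by a union bound over the finite state space) is a point the paper's own proof silently elides, so your version is if anything slightly more rigorous while reaching the same bound up to constants.
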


\begin{proof}
From \Cref{thm:point_error}, for any $(s,a,r,s',a') \in \Omega^{d_k}_{s}$, we have
\begin{equation}
    \big| \hat{Q}(s, a|\Omega^{d_k}_{s}) - Q_{\Omega^{d_k}_{s}}(s, a) \big|
\;\le\; \varepsilon^{s}_{\text{approx}} (1+B_\phi)
+ C B_\phi \sqrt{\tfrac{d+\log(1/\delta)}{\sigma|\Omega^{d_k}_{s}|}}.
\end{equation}
Taking expectation over $(s,a)\sim\mu$, restricted to $(s,a)\in\Omega^{d_k}_{s}$, and noting that the right-hand side depends only on $s$, we obtain
\begin{equation}
\begin{aligned}
    &\mathbb{E}_{(s, a) \sim \mu} \Big[ \big| \hat{Q}(s, a|\Omega^{d_k}_{s}) - Q_{\Omega^{d_k}_{s}}(s, a) \big| 
    \,\big|\, (s,a)\in\Omega^{d_k}_{s} \Big]\\
\;\le\; &\mathbb{E}_{s \sim \mu_{\mathcal{S}}} \Bigg[ \varepsilon^{s}_{\text{approx}} (1+B_\phi)
+ C B_\phi \sqrt{\tfrac{d+\log(1/\delta)}{\sigma|\Omega^{d_k}_{s}|}} \Bigg].
\end{aligned}
\end{equation}
This proves the result.
\end{proof}

\subsection{Proof of the Theorem on Policy Performance Gap}\label{appendix:proof_policy}
In this section, we provide the proof of~\Cref{thm:policy_gap}.
\begin{lemma}[Performance Difference Lemma]\label{lem:prem_diff}
Let $\pi$ be a policy, and let $d^\pi$ denote its discounted state distribution. 
Then the performance gap between $\pi$ and the optimal policy $\pi^*$ satisfies
\begin{equation}\label{eq:perm_diff}
    J(\pi^*) - J(\pi) 
    = \frac{1}{1 - \gamma} \; \mathbb{E}_{s \sim d^\pi, a \sim \pi} \Big[ Q^*(s, a^*) - Q^*(s, a) \Big],
\end{equation}
where $a^* = \arg\max_a Q^*(s, a)$.
\end{lemma}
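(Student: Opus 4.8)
The plan is to recognize \cref{lem:prem_diff} as the classical performance difference lemma, specialized so that the comparator is the optimal policy $\pi^*$ and only its value functions $Q^*,V^*$ appear on the right-hand side. Writing the objective as $J(\pi)=\mathbb{E}_{s_0\sim p_0}[V^\pi(s_0)]$ and likewise $J(\pi^*)=\mathbb{E}_{s_0\sim p_0}[V^*(s_0)]$, and using $Q^*(s,a^*)=\max_a Q^*(s,a)=V^*(s)$, I would first observe that the claimed identity is equivalent to
\[
J(\pi)-J(\pi^*) \;=\; \frac{1}{1-\gamma}\,\mathbb{E}_{s\sim d^\pi,\,a\sim\pi}\big[Q^*(s,a)-V^*(s)\big].
\]
So it suffices to establish this form and then negate.

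The core step is a telescoping argument along trajectories generated by $\pi$. Let $\tau=(s_0,a_0,s_1,a_1,\dots)$ with $s_0\sim p_0$, $a_t\sim\pi(\cdot|s_t)$, and $s_{t+1}\sim p_{\text{MDP}}(\cdot|s_t,a_t)$. Writing $J(\pi)=\mathbb{E}_\tau\big[\sum_{t\ge0}\gamma^t\cR(s_t,a_t)\big]$ and inserting the telescoping identity $-V^*(s_0)=\mathbb{E}_\tau\big[\sum_{t\ge0}\big(\gamma^{t+1}V^*(s_{t+1})-\gamma^t V^*(s_t)\big)\big]$, I would group terms by the common factor $\gamma^t$ to obtain
\[
J(\pi)-J(\pi^*) \;=\; \mathbb{E}_\tau\Big[\sum_{t\ge0}\gamma^t\big(\cR(s_t,a_t)+\gamma V^*(s_{t+1})-V^*(s_t)\big)\Big].
\]
Taking the conditional expectation over $s_{t+1}\sim p_{\text{MDP}}(\cdot|s_t,a_t)$ and invoking the Bellman optimality equation $\cR(s,a)+\gamma\,\mathbb{E}_{s'}[V^*(s')]=Q^*(s,a)$ collapses the bracketed term to $Q^*(s_t,a_t)-V^*(s_t)$.

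To finish, I would rewrite the discounted trajectory sum as an expectation under the normalized discounted state-occupancy measure, using $\sum_{t\ge0}\gamma^t\,\mathbb{E}^\pi[f(s_t,a_t)]=\tfrac{1}{1-\gamma}\,\mathbb{E}_{s\sim d^\pi,\,a\sim\pi}[f(s,a)]$ with $f(s,a)=Q^*(s,a)-V^*(s)$. This yields the displayed equivalent form; negating both sides and substituting $V^*(s)=Q^*(s,a^*)$ recovers \cref{eq:perm_diff}.

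The main obstacle is purely analytic: justifying the interchange of the infinite sum with the expectation and the vanishing of the tail $\gamma^t V^*(s_t)$ that makes the telescoping valid. Both follow once $V^*$ is bounded, which holds under the bounded-reward condition $|r|\le B_r$ already used in the surrounding analysis (giving $|V^*(s)|\le B_r/(1-\gamma)$ and hence $\gamma^t V^*(s_t)\to0$); dominated convergence then legitimizes every rearrangement. The remaining steps are routine bookkeeping with the occupancy-measure normalization.
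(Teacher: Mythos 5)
Your proof is correct. The telescoping of $\gamma^t V^*(s_t)$ along trajectories generated by $\pi$, the Bellman identity $Q^*(s,a)=\cR(s,a)+\gamma\,\mathbb{E}_{s'\sim p_{\text{MDP}}(\cdot\mid s,a)}[V^*(s')]$, and the occupancy-measure normalization together yield \cref{eq:perm_diff} after negating and substituting $V^*(s)=Q^*(s,a^*)$; the boundedness $|V^*|\le B_r/(1-\gamma)$ under $|r|\le B_r$ justifies both the vanishing tail and the interchange of sum and expectation, as you note. It is worth pointing out, though, that you have proved something the paper itself does not: the proof environment attached to \cref{lem:prem_diff} opens with ``From \cref{eq:perm_diff}'', i.e.\ it \emph{assumes} the performance-difference identity, and then decomposes $Q^*(s,\pi^*(s))-Q^*(s,\pi(s))$ into three terms, drops the middle one because $\pi$ is greedy with respect to $\hat Q$, and bounds the outer two via \cref{thm:point_error} --- that block is really the proof of the downstream policy-gap bound (\cref{thm:policy_gap}), with the lemma treated as a known classical fact. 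Your argument therefore supplies the justification the paper omits rather than an alternative to an existing one. One small presentational point: you should state explicitly the normalization $d^\pi(s)=(1-\gamma)\sum_{t\ge0}\gamma^t\,\Pr(s_t=s)$, since the $\tfrac{1}{1-\gamma}$ prefactor in \cref{eq:perm_diff} depends on that convention and the paper never writes it down.
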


\begin{proof}
From \Cref{eq:perm_diff}, for any $s\in\cS$,
\begin{align}
Q^*(s, \pi^*(s)) - Q^*(s, \pi(s))
&= \big(Q^*(s, \pi^*(s)) - \hat{Q}(s, \pi^*(s))\big)
+ \big(\hat{Q}(s, \pi^*(s)) - \hat{Q}(s, \pi(s))\big) \nonumber \\
&\quad + \big(\hat{Q}(s, \pi(s)) - Q^*(s, \pi(s))\big). \label{eq:q_decomp_new}
\end{align}
Since $\pi$ is greedy with respect to $\hat{Q}$, the middle term is non-positive. 
Thus,
\begin{align}
Q^*(s, \pi^*(s)) - Q^*(s, \pi(s))
&\le |Q^*(s, \pi^*(s)) - \hat{Q}(s, \pi^*(s))|
   + |Q^*(s, \pi(s)) - \hat{Q}(s, \pi(s))| \nonumber \\
&\le 2\delta(s), \label{eq:delta_bound_new}
\end{align}
where, by \Cref{thm:point_error},
\begin{equation}
    \delta(s) 
= \varepsilon^s_{\text{approx}} (1+B_\phi)
+ C B_\phi \sqrt{\tfrac{d+\log(1/\delta)}{\sigma|\Omega^{d_k}_{s}|}}.
\end{equation}

Taking expectations in \Cref{eq:perm_diff} and applying \Cref{eq:delta_bound_new} yields
\begin{equation}
    J(\pi^*) - J(\pi) 
\le \frac{2}{1 - \gamma} \, \mathbb{E}_{s \sim d^\pi}\!\big[\delta(s)\big],
\end{equation}
which gives the desired bound in~\Cref{eq:policy_gap_new}.
\end{proof}

\section{\icql\ Variants for \texttt{TD3+BC}}\label{apx:icql_extension}

In this section, we illustrate how to extend our method to \texttt{TD3+BC} \citep{DBLP:conf/nips/FujimotoG21}. \texttt{TD3+BC} introduces a simple behavior cloning regularization into value-based learning. This algorithm is easy to integrate with our framework, remains stable across diverse tasks, and serves as a strong baseline in the literature. Its simplicity and effectiveness make it an ideal testbed for evaluating the impact of localized Q-function estimation. Together, these properties provide sufficient coverage of common design choices in offline RL. Other algorithms can be extended in a similar manner, but we omit them here for clarity and focus.

Our proposed \icql\ can be seamlessly integrated into existing offline RL algorithms by replacing the global Q-function with a local, context-dependent estimator defined in Definition~\ref{defn:q_func}. We demonstrate this idea by instantiating \texttt{ICQL} with \texttt{TD3+BC}; see Algorithm~\ref{alg:icql_train} for additional details.

\paragraph{ICQL-TD3+BC.} \texttt{TD3+BC} uses a standard Bellman backup for the critic and augments the actor objective with behavior cloning. We again use the locally estimated $\hat{Q}(s, a)$ in both components. The critic loss is:
\begin{align}\label{eq:loss_td3}
    \mathcal{L}_{\text{critic}}^{\texttt{TD3+BC}} = \mathbb{E}_{(s, a, r, s') \sim \mathcal{D}} \left[ \left( \hat{Q}(s, a|\Omega^{d_k}_s) - y \right)^2 \right], 
\end{align}
where $y = r + \gamma \min_{i=1,2} \hat{Q}_{\text{target}}^{(i)}(s', \pi(s')|\Omega_{s}^{d_k}).$ The actor is trained to maximize the estimated Q-value while remaining close to the dataset policy:
\begin{equation}
    \mathcal{L}_{\text{actor}}^{\texttt{TD3+BC}} = -\mathbb{E}_{s \sim \mathcal{D}} \left[ \hat{Q}(s, \pi(s)|\Omega_{s}^{d_k}) \right] + \alpha \cdot \mathbb{E}_{(s, a) \sim \mathcal{D}} \left[ \| \pi(s) - a \|^2 \right].
\end{equation}

Experimental results are reported in~\Cref{tab:extra_td3bc}.
\begin{table}[htbp]
\centering
\caption{Evaluation for TD3+BC based ICQL variant on Mujoco and Adroit tasks. Average normalized scores are reported over 5 random seeds.}
\small
\label{tab:extra_td3bc}
\begin{tabular}{@{}c R{1cm} R{1.8cm} R{1.2cm}@{}}
\toprule
\textbf{Mujoco Tasks} & \multicolumn{1}{c}{\textbf{TD3-BC}} & \multicolumn{1}{c}{\textbf{ICQL-TD3-BC(ours)}} & \multicolumn{1}{c}{\textbf{Gain(\%)}} \\ \midrule
Walker2d-Medium-Expert-v2 & 109.2 & \textbf{109.3} & 0.1\% \\
Walker2d-Medium-v2 & \textbf{77.0} & 72.7 & -5.7\% \\
Walker2d-Medium-Replay-v2 & 41.5 & \textbf{55.0} & 32.5\% \\
Hopper-Medium-Expert-v2 & 78.2 & \textbf{87.2} & 11.5\% \\
Hopper-Medium-v2 & 53.5 & \textbf{57.9} & 8.3\% \\
Hopper-Medium-Replay-v2 & 59.4 & \textbf{65.8} & 10.9\% \\
HalfCheetah-Medium-Expert-v2 & 62.8 & \textbf{63.7} & 1.5\% \\
HalfCheetah-Medium-v2 & \textbf{43.1} & 42.7 & -0.8\% \\
HalfCheetah-Medium-Replay-v2 & 41.8 & \textbf{45.9} & 9.8\% \\ \midrule
\textbf{Average} & 62.9 & \textbf{66.7} & 6.0\% \\ \midrule
\textbf{Adroit Tasks} & \multicolumn{1}{c}{\textbf{TD3-BC}} & \multicolumn{1}{c}{\textbf{ICQL-TD3-BC(ours)}} & \multicolumn{1}{c}{\textbf{Gain(\%)}} \\ \midrule
Pen-Human-v1 & 64.6 & \textbf{68.3} & 5.7\% \\
Pen-Cloned-v1 & \textbf{76.8} & 74.7 & -2.8\% \\
Hammer-Human-v1 & 1.5 & \textbf{1.6} & 7.9\% \\
Hammer-Cloned-v1 & 1.8 & \textbf{7.3} & 300.6\% \\
Door-Human-v1 & 0.2 & \textbf{2.0} & 1253.3\% \\
Door-Cloned-v1 & \textbf{-0.1} & -0.1 & -60.0\% \\ \midrule
\textbf{Average} & 24.2 & \textbf{25.6} & 6.2\% \\ \bottomrule
\end{tabular}
\end{table}

\section{Implementation Details}

In this section, we present the detailed network architectures of our in-context critic and actor. We also describe the hyperparameter settings used in this paper.

\subsection{In-Context Critic Network}
The in-context critic consists of a feature extractor and a linear Transformer. The feature extractor is a 3-layer MLP with 256 hidden units. A Tanh activation is applied in the last layer, while ReLU is used in the other layers, followed by layer normalization. The output dimension of the feature extractor is 64. A dropout rate of 0.1 is applied during training. The linear Transformer is constructed as described in~\Cref{eq:linear_attn}, where trainable parameters appear only in $G$. The definition of $G$ is given in~\Cref{eq:linear-trans_weight}, where $C_l$ denotes the trainable parameters in the $l$-th layer. The shape of $C_l$ is $64\times64$. We use gradient normalization to stabilize training by scaling gradients to have a maximum $L_2$ norm of 10. The number of linear Transformer layers is set to 20.

\subsection{Policy Network}
For \texttt{ICQL-IQL}, the policy network is implemented as an MLP with 2 hidden layers and ReLU activations. The policy network also includes an additional learnable vector that represents the logarithmic standard deviation of the action distribution. A dropout rate of 0.1 is applied during training.

For \texttt{ICQL-TD3+BC}, the policy network is implemented as a 3-layer MLP with ReLU activations.

\subsection{Hyper-parameter Settings}
For \texttt{ICQL-IQL}, we follow the original IQL paper and use different values of the expectile parameter $\tau$ and the temperature parameter $\beta$ for different offline datasets. We search over $\{0.5, 0.7, 0.9\}$ for the expectile parameter and $\{1, 2, 3\}$ for the temperature parameter. The detailed settings are listed in~\Cref{tab:iql-table}.

\begin{table}[htbp]
\centering
\caption{Expectile and temperature settings for \texttt{ICQL} experiments.}
\label{tab:iql-table}
\resizebox{\textwidth}{!}{
\begin{tabular}{@{}cccccc@{}}
\toprule
\textbf{Tasks} & \textbf{Expectile} & \textbf{Temperature} & \textbf{Tasks} & \textbf{Expectile} & \textbf{Temperature} \\ \midrule
Walker2d-Medium-Expert-v2 & 0.7 & 1 & Pen-Human-v1 & 0.7 & 2 \\
Walker2d-Medium-v2 & 0.7 & 1 & Pen-Cloned-v1 & 0.9 & 2 \\
Walker2d-Medium-Replay-v2 & 0.7 & 1 & Hammer-Human-v1 & 0.5 & 1 \\
Hopper-Medium-Expert-v2 & 0.7 & 1 & Hammer-Cloned-v1 & 0.9 & 2 \\
Hopper-Medium-v2 & 0.5 & 1 & Door-Human-v1 & 0.5 & 1 \\
Hopper-Medium-Replay-v2 & 0.7 & 2 & Door-Cloned-v1 & 0.7 & 2 \\
HalfCheetah-Medium-Expert-v2 & 0.5 & 2 & Kitchen-Complete-v0 & 0.9 & 1 \\
HalfCheetah-Medium-v2 & 0.5 & 1 & Kitchen-Mixed-v0 & 0.5 & 1 \\
HalfCheetah-Medium-Replay-v2 & 0.7 & 1 & Kitchen-Partial-v0 & 0.9 & 2 \\ \bottomrule
\end{tabular}
}
\end{table}

For \texttt{ICQL-TD3+BC}, we follow the settings in the original paper and use the same hyperparameter value $\alpha=2.5$ for all datasets.

Other common hyperparameters are listed in~\Cref{tab:hyperparameter}.
\begin{table}[htbp]
\centering
\small
\caption{Common hyperparameters for the main \texttt{ICQL} experiments.}
\label{tab:hyperparameter}
\begin{tabular}{@{}cc@{}}
\toprule
\textbf{Hyperparameter} & \textbf{Value} \\ \midrule
Hidden dimension & 256 \\
Batch size & 256 \\
Training steps & 1,000,000 \\
Evaluation episodes & 10 \\
Discount factor & 0.99 \\
Policy learning rate & 3.0e-4 \\
Critic learning rate & 3.0e-4 \\
Context length & 20 \\ \bottomrule
\end{tabular}
\end{table}

\section{Additional Experiment Results}
\subsection{Extended Baselines}
In this section, we extend our comparisons to additional methods, including RA-DT~\cite{radt}, ReBRAC~\cite{rebrac}, DMG~\cite{dmg}, FQL~\cite{fql}, and QC~\cite{qc}, following their official implementations. ICQL demonstrates competitive or superior performance on most tasks. The results are shown in~\Cref{tab:main_results_new}.
\begin{table}[htbp]
\footnotesize
\centering
\caption{
Performance comparison across Mujoco, Adroit, and Kitchen tasks. Average and standard deviation of scores are reported over 5 random seeds.
}
\label{tab:main_results_new}
\resizebox{\textwidth}{!}{
\begin{tabular}{@{}crrrrrrrrrrr@{}}
\toprule
\textbf{Task} & \textbf{BC} & \textbf{TD3BC} & \textbf{CQL} & \textbf{IQL} & \textbf{DT} & \textbf{RADT} & \textbf{ReBRAC} & \textbf{DMG} & \textbf{FQL} & \textbf{QC} & \textbf{ICQL} \\
\midrule
Walker2d-ME & 107.5 & 109.2 & 98.7 & \underline{109.8} & 70.7 & 107.8 & 109.2 & 109.5 & 101.0 & 102.8 & \textbf{113.3} \\
Walker2d-M & 75.3 & 77.0 & 79.2 & 71.5 & 70.2 & 68.9 & \underline{82.8} & \textbf{85.0} & 72.4 & 34.1 & 80.3 \\
Walker2d-MR & 26.0 & 41.5 & 77.2 & 61.0 & 54.8 & 67.2 & 39.4 & \underline{81.9} & 60.9 & 46.6 & \textbf{81.9} \\
Hopper-ME & 52.5 & 78.2 & 105.4 & 98.5 & 57.5 & 109.4 & 98.7 & \underline{109.8} & 60.1 & 44.0 & \textbf{111.0} \\
Hopper-M & 52.9 & 53.5 & 58.0 & 63.3 & 57.1 & 62.4 & 60.6 & \textbf{92.3} & 55.6 & \underline{64.6} & 62.6 \\
Hopper-MR & 18.1 & 59.4 & 95.0 & 82.4 & 65.8 & 81.6 & 87.4 & \textbf{100.1} & 55.0 & 18.6 & \underline{96.4} \\
HalfCheetah-ME & 55.2 & 62.8 & 62.4 & 83.4 & 70.8 & 90.9 & 84.6 & \underline{93.6} & 92.9 & \textbf{94.2} & 89.1 \\
HalfCheetah-M & 42.6 & 43.1 & 44.4 & 42.5 & 42.8 & 42.0 & 44.6 & \underline{47.9} & 43.9 & \textbf{48.2} & 45.9 \\
HalfCheetah-MR & 36.6 & 41.8 & \textbf{45.5} & 38.9 & 39.5 & 38.9 & 36.9 & 44.6 & 40.0 & 40.5 & \underline{44.7} \\
Pen-Human & 63.9 & 64.6 & 37.5 & \underline{89.5} & 79.5 & 17.8 & \textbf{91.5} & 66.2 & 61.2 & 55.7 & 85.6 \\
Pen-Cloned & 37.0 & 76.8 & 39.2 & \underline{84.9} & 74.0 & 32.4 & 68.9 & 67.5 & 23.5 & 54.8 & \textbf{89.4} \\
Hammer-Human & 1.2 & 1.5 & 4.4 & \underline{7.2} & 1.7 & 0.7 & 1.1 & \textbf{18.4} & 1.1 & 1.2 & 3.7 \\
Hammer-Cloned & 0.6 & 1.8 & 2.1 & 0.5 & 3.7 & 1.3 & 0.2 & \textbf{13.4} & 1.7 & 2.2 & \underline{4.5} \\
Door-Human & 2.0 & 0.2 & 9.9 & 9.8 & 5.5 & \underline{13.2} & -0.1 & 0.1 & 0.2 & 0.7 & \textbf{17.1} \\
Door-Cloned & 0.0 & -0.1 & 0.1 & 7.6 & 3.2 & 2.4 & 9.0 & 3.7 & 0.1 & 4.4 & \textbf{11.7} \\
Kitchen-Complete & \underline{65.0} & 57.5 & 43.8 & 59.2 & 52.5 & 32.5 & 60.0 & 22.5 & 16.3 & 27.5 & \textbf{79.3} \\
Kitchen-Mixed & 51.5 & 53.5 & 51.0 & 53.3 & \textbf{60.0} & 54.1 & 47.5 & 30.0 & 45.0 & \textbf{60.0} & \underline{59.5} \\
Kitchen-Partial & 38.0 & 46.7 & 49.8 & 45.8 & \underline{55.0} & 53.8 & 62.5 & 37.5 & 15.8 & 52.5 & \textbf{61.5} \\
\midrule
\textbf{Overall Average} 
& 47.3 & 47.7 & 58.5 & \underline{63.2} & 56.2 & 57.2 & 60.0 & 62.0 & 50.5 & 47.7 & \textbf{69.7} \\
\bottomrule
\end{tabular}
}
\end{table}

\subsection{Numerical Results for Ablation Studies on the Number of Layers and Context Lengths}
In this section, we provide numerical results corresponding to~\Cref{sec:abla_nl} and~\Cref{sec:abla_k}.
\begin{table}[htbp]
\centering
\small
\caption{Normalized scores for Gym tasks with different lengths of contexts and different number of layers in \texttt{ICQL-IQL}.}
\label{tab:ctxlen_layer}
\begin{tabular}{@{}crrrrrrrr@{}}
\toprule
 & \multicolumn{4}{c}{\textbf{Context Length}} & \multicolumn{4}{c}{\textbf{Number of Layers}} \\ \midrule
\textbf{Gym Tasks} & \textbf{10} & \textbf{20} & \textbf{30} & \textbf{40} & \textbf{4} & \textbf{8} & \textbf{16} & \textbf{20} \\ \midrule
Walker2d-Medium-Expert & 111.1 & \textbf{113.3} & 111.7 & 110.2 & 102.3 & 103.3 & 104.1 & \textbf{113.3} \\
Walker2d-Medium & 79.6 & 80.3 & 70.9 & \textbf{80.7} & 78.0 & 78.4 & 74.9 & \textbf{80.3} \\
Walker2d-Medium-Replay & 77.5 & \textbf{81.9} & 69.4 & 74.4 & 76.3 & 77.0 & 75.8 & \textbf{81.9} \\
Hopper-Medium-Expert & 103.7 & \textbf{111.0} & 106.0 & 103.4 & 104.8 & \textbf{111.8} & 107.0 & 111.0 \\
Hopper-Medium & \textbf{73.8} & 62.6 & 60.2 & 59.4 & 65.7 & \textbf{67.6} & 67.3 & 62.6 \\
Hopper-Medium-Replay & 89.9 & \textbf{96.4} & 81.2 & 83.9 & \textbf{100.5} & 97.8 & 91.8 & 96.4 \\
HalfCheetah-Medium-Expert & \textbf{89.2} & 89.1 & 88.8 & 83.5 & 71.3 & 63.3 & 74.8 & \textbf{89.1} \\
HalfCheetah-Medium & 45.9 & 45.9 & \textbf{46.3} & 45.8 & 45.1 & 44.8 & 45.0 & \textbf{45.9} \\
HalfCheetah-Medium-Replay & 43.7 & \textbf{44.7} & 44.3 & 44.2 & 43.5 & 43.6 & 43.8 & \textbf{44.7} \\
\midrule
\textbf{Average} & 79.4 & \textbf{80.6} & 75.4 & 76.2 & 76.4 & 76.4 & 76.0 & \textbf{80.6} \\ \bottomrule
\end{tabular}
\end{table}

\subsection{Computation Overhead Analysis}\label{app:computation}
\subsubsection{Comparison of Training Time, Inference Time, GFLOPs, and Memory Consumption}
In this section, we compare training time, inference time, GFLOPs, and memory consumption across all baseline methods. The analysis is conducted on the Walker2d-Medium-Expert dataset, and the results are summarized in~\Cref{tab:all_time_flop_mem}. This analysis shows that, although ICQL incurs moderate additional computational cost relative to most advanced baselines, it remains more efficient than sequential models such as DT and RA-DT while achieving substantially stronger performance.
\begin{table}[htbp]
\centering
\small
\caption{Computation cost comparison across offline RL algorithms, including per-step training/inference time, FLOPs, and peak memory consumption.}
\label{tab:all_time_flop_mem}
\begin{tabular}{@{} c R{1.5cm} c R{1.7cm} R{1.5cm} @{}}
\toprule
\textbf{Algorithm} & \multicolumn{1}{c}{\textbf{Train Time (ms)}} & \multicolumn{1}{c}{\textbf{Infer Time (ms)}} & \multicolumn{1}{c}{\textbf{Training GFLOPs}} & \multicolumn{1}{c}{\textbf{Peak Memory (MB)}} \\
\midrule
TD3BC & 7.23 & 0.26 & 0.17 & 30 \\
IQL & 10.52 & 0.61 & 0.22 & 26 \\
CQL & 47.57 & 0.61 & 2.64 & 79 \\
DT & 68.42 & 2.89 & 151.40 & 1383 \\
RA-DT & 121.02 & 3.13 & 1103.79 & 1424 \\
ReBRAC & 13.91 & 0.26 & 0.18 & 38 \\
DMG & 32.33 & 0.42 & 0.55 & 27 \\
FQL & 19.63 & 0.37 & 4.53 & 126 \\
QC & 21.60 & 0.25 & 4.65 & 244 \\
ICQL & 70.73 & 0.51 & 1.03 & 375 \\
\bottomrule
\end{tabular}
\end{table}

\subsubsection{Analysis of GFLOPs and Memory Consumption Scaling of \texttt{ICQL}}
We further report training GFLOPs and memory consumption for context lengths in $\{10,20,30,40\}$ and for different numbers of linear Transformer layers in~\Cref{tab:gflops_ctxlen_nlayer} and~\Cref{tab:mem_ctxlen_nlayer}. The required training time scales with both context length and the number of layers. Using a context length of 20 and 20 linear Transformer layers remains comparatively efficient while providing competitive performance.
\begin{table}[htbp]
\centering
\small
\caption{Training FLOPs (in GFLOPs) for different numbers of layers and context lengths $K$.}
\label{tab:gflops_ctxlen_nlayer}
\begin{tabular}{@{}ccccc@{}}
\toprule
\textbf{\# Layers} & \textbf{K=10} & \textbf{K=20} & \textbf{K=30} & \textbf{K=40} \\
\midrule
\textbf{10} & 0.25 & 0.51 & 0.81 & 1.14 \\
\textbf{20} & 0.50 & 1.03 & 1.62 & 2.28 \\
\textbf{30} & 0.75 & 1.54 & 2.43 & 3.42 \\
\textbf{40} & 1.00 & 2.06 & 3.24 & 4.56 \\
\bottomrule
\end{tabular}
\end{table}
\begin{table}[htbp]
\centering
\small
\caption{Peak memory consumption (in MB) for different numbers of layers and context lengths $K$.}
\label{tab:mem_ctxlen_nlayer}
\begin{tabular}{@{}crrrr@{}}
\toprule
\textbf{\# Layers} & \textbf{K=10} & \textbf{K=20} & \textbf{K=30} & \textbf{K=40} \\
\midrule
\textbf{10} & 171.28 & 306.56 & 445.57 & 590.39 \\
\textbf{20} & 209.71 & 375.38 & 549.51 & 738.00 \\
\textbf{30} & 248.39 & 443.29 & 655.26 & 879.30 \\
\textbf{40} & 288.94 & 511.58 & 758.94 & 1023.75 \\
\bottomrule
\end{tabular}
\end{table}

\subsubsection{Detailed Comparison of Retrieval and Training Time of \texttt{ICQL} across All Datasets}
To mitigate repeated computation, we pre-compute all retrieval indices once before training for three reasons: (1) the offline dataset is fixed; (2) the retrieval rule is deterministic; and (3) pre-computation does not affect the learning dynamics or outcomes. This converts the per-step retrieval cost into an amortized constant-time lookup during training. ICQL follows a standard actor-critic training paradigm in which the critic uses retrieved context to estimate local Q-values and the policy learns from these Q-values. At evaluation time, only the policy is used, which is consistent with standard actor-critic practice. 

We report the real-time retrieval cost, the lookup time with cached indices, and the training and inference speed for all datasets. The reported results are averaged over all datasets used in our experiments. As shown in~\Cref{tab:retri_train_inf_time}, cached retrieval adds only approximately 0.03 ms per step, which is negligible relative to the overall training time. A detailed breakdown of retrieval time and training and inference time is provided in~\Cref{tab:retrieval_time_all} and~\Cref{tab:train_infer_time}.
\begin{table}[htbp]
\centering
\small
\caption{Average \texttt{ICQL} runtime of retrieval, training with different context lengths, and inference, across all datasets.}
\label{tab:retri_train_inf_time}
\begin{tabular}{l R{1.1cm}}
\toprule
 & \multicolumn{1}{c}{\textbf{Time (ms)}} \\
\midrule
\textbf{Retrieval with Cached Index} & 0.03 \\
\textbf{Train with K=10} & 46.94 \\
\textbf{Train with K=20} & 72.15 \\
\textbf{Train with K=30} & 113.86 \\
\textbf{Train with K=40} & 171.95 \\
\textbf{Inference} & 0.54 \\
\bottomrule
\end{tabular}
\end{table}
\begin{table*}[htbp]
\centering
\small
\caption{Detailed retrieval time (ms) analysis across tasks and context lengths. 
Cached index retrieval eliminates repeated nearest-neighbor searches and greatly reduces overhead.}
\label{tab:retrieval_time_all}
\begin{tabular}{@{}c c c c c c c@{}}
\toprule
\textbf{Task} & \textbf{Dataset Size} & \textbf{K=10} & \textbf{K=20} & \textbf{K=30} & \textbf{K=40} & \textbf{Cached} \\
\midrule
Walker2d-Medium-Expert & 1998318 & 6.38 & 6.52 & 6.90 & 7.70 & 0.04 \\
Walker2d-Medium        & 999322  & 3.98 & 3.96 & 4.37 & 5.16 & 0.03 \\
Walker2d-Medium-Replay & 301698  & 1.92 & 2.18 & 2.64 & 3.90 & 0.03 \\
\midrule
Hopper-Medium-Expert & 1998966 & 6.04 & 6.11 & 6.39 & 7.31 & 0.03 \\
Hopper-Medium        & 999998  & 3.85 & 3.71 & 4.05 & 4.77 & 0.03 \\
Hopper-Medium-Replay & 401598  & 2.10 & 2.16 & 2.56 & 3.11 & 0.03 \\
\midrule
HalfCheetah-Medium-Expert & 1998000 & 6.27 & 6.41 & 6.75 & 7.37 & 0.03 \\
HalfCheetah-Medium        & 999000  & 3.96 & 3.90 & 4.24 & 5.17 & 0.04 \\
HalfCheetah-Medium-Replay & 201798  & 1.58 & 1.61 & 1.81 & 2.53 & 0.03 \\
\midrule
Pen-Human  & 4975   & 0.89 & 0.81 & 0.99 & 1.15 & 0.03 \\
Pen-Cloned & 496264 & 2.91 & 3.05 & 3.52 & 6.67 & 0.03 \\
\midrule
Hammer-Human  & 11285  & 0.88 & 0.89 & 1.05 & 1.17 & 0.03 \\
Hammer-Cloned & 996394 & 4.56 & 4.54 & 4.93 & 5.82 & 0.03 \\
\midrule
Door-Human  & 6704   & 0.88 & 0.88 & 1.07 & 1.17 & 0.03 \\
Door-Cloned & 995642 & 4.39 & 4.53 & 4.92 & 5.94 & 0.03 \\
\midrule
Kitchen-Complete & 3679   & 0.89 & 0.82 & 0.95 & 1.12 & 0.03 \\
Kitchen-Partial  & 136937 & 1.36 & 1.41 & 1.60 & 1.91 & 0.03 \\
Kitchen-Mixed    & 136937 & 1.49 & 1.38 & 1.63 & 2.11 & 0.03 \\
\bottomrule
\end{tabular}
\end{table*}

\begin{table*}[htbp]
\centering
\small
\caption{Training and inference time (ms) for different context lengths across tasks. 
Training time grows approximately linearly with the context length, while inference time remains nearly constant.}
\label{tab:train_infer_time}
\begin{tabular}{@{}c c c c c c@{}}
\toprule
\textbf{Task} & \textbf{K=10} & \textbf{K=20} & \textbf{K=30} & \textbf{K=40} & \textbf{Inference} \\
\midrule
Walker2d-Medium-Expert & 48.90 & 70.73 & 111.75 & 170.71 & 0.51 \\
Walker2d-Medium        & 46.63 & 71.77 & 113.82 & 170.85 & 0.50 \\
Walker2d-Medium-Replay & 48.68 & 74.75 & 115.43 & 171.97 & 0.52 \\
\midrule
Hopper-Medium-Expert & 48.31 & 70.71 & 114.56 & 173.52 & 0.51 \\
Hopper-Medium        & 46.39 & 71.60 & 113.35 & 171.63 & 0.57 \\
Hopper-Medium-Replay & 46.08 & 72.32 & 112.89 & 170.58 & 0.56 \\
\midrule
HalfCheetah-Medium-Expert & 48.33 & 73.27 & 115.90 & 171.46 & 0.58 \\
HalfCheetah-Medium        & 47.69 & 74.45 & 113.85 & 171.75 & 0.51 \\
HalfCheetah-Medium-Replay & 47.30 & 71.81 & 114.32 & 172.86 & 0.57 \\
\midrule
Pen-Human  & 45.65 & 72.41 & 114.23 & 171.73 & 0.56 \\
Pen-Cloned & 44.50 & 69.59 & 112.02 & 170.31 & 0.51 \\
\midrule
Hammer-Human  & 46.88 & 73.78 & 113.93 & 171.66 & 0.52 \\
Hammer-Cloned & 47.31 & 72.55 & 114.13 & 171.94 & 0.57 \\
\midrule
Door-Human  & 46.16 & 71.34 & 113.11 & 171.44 & 0.57 \\
Door-Cloned & 45.37 & 71.20 & 112.11 & 170.61 & 0.58 \\
\midrule
Kitchen-Complete & 47.45 & 73.65 & 116.36 & 175.98 & 0.54 \\
Kitchen-Partial  & 48.11 & 72.35 & 116.50 & 175.42 & 0.52 \\
Kitchen-Mixed    & 45.25 & 70.47 & 111.17 & 170.59 & 0.52 \\
\bottomrule
\end{tabular}
\end{table*}

\subsection{Failure Analysis on the Hammer Dataset}\label{app:fail}
In this section, we provide a failure analysis on the Hammer-Human dataset. We find that Hammer-Human exhibits two properties that make it particularly challenging for \texttt{ICQL}.

First, the dataset size is small and the coverage is sparse. Hammer-Human contains only 24 trajectories (\textasciitilde 11k transitions), which is substantially fewer than Hammer-Cloned (\textasciitilde 996k transitions). This leads to larger distances between the query state and its retrieved neighbors, which violates locality assumptions, and to poorer state-space coverage, which makes retrieval more likely to include semantically irrelevant transitions.

Second, the transitions are of low quality and the rewards are noisy. Most Hammer-Human trajectories have very low returns. As a result, for each query state, the retrieved neighbors tend to provide weak reward signals, which makes it more difficult to fit an effective local Q-function.

We provide comparisons of dataset statistics in~\Cref{tab:hammer_dataset} and comparisons of the distributions of the mean distance between query states and retrieved states in~\Cref{fig:hammer_compare}. Both support these observations.
\begin{table}[htbp]
\centering
\small
\caption{Dataset statistics for Hammer-Human and Hammer-Cloned.}
\label{tab:hammer_dataset}
\begin{tabular}{lcc}
\toprule
\textbf{Dataset} & \textbf{Hammer-Human} & \textbf{Hammer-Cloned} \\
\midrule
\textbf{Number of trajectories} & 24 & 3605 \\
\textbf{Number of transitions} & 11285 & 996394 \\
\textbf{Mean Trajectory Length (Min--Max)} & 455.2 (347--623) & 276.4 (199--623) \\
\textbf{Mean Trajectory Return (Min--Max)} & 2817.5 (-109--16022) & 779.8 (-407--16022) \\
\bottomrule
\end{tabular}
\end{table}
\begin{figure*}[htbp]
    \centering
\includegraphics[width=0.8\linewidth]{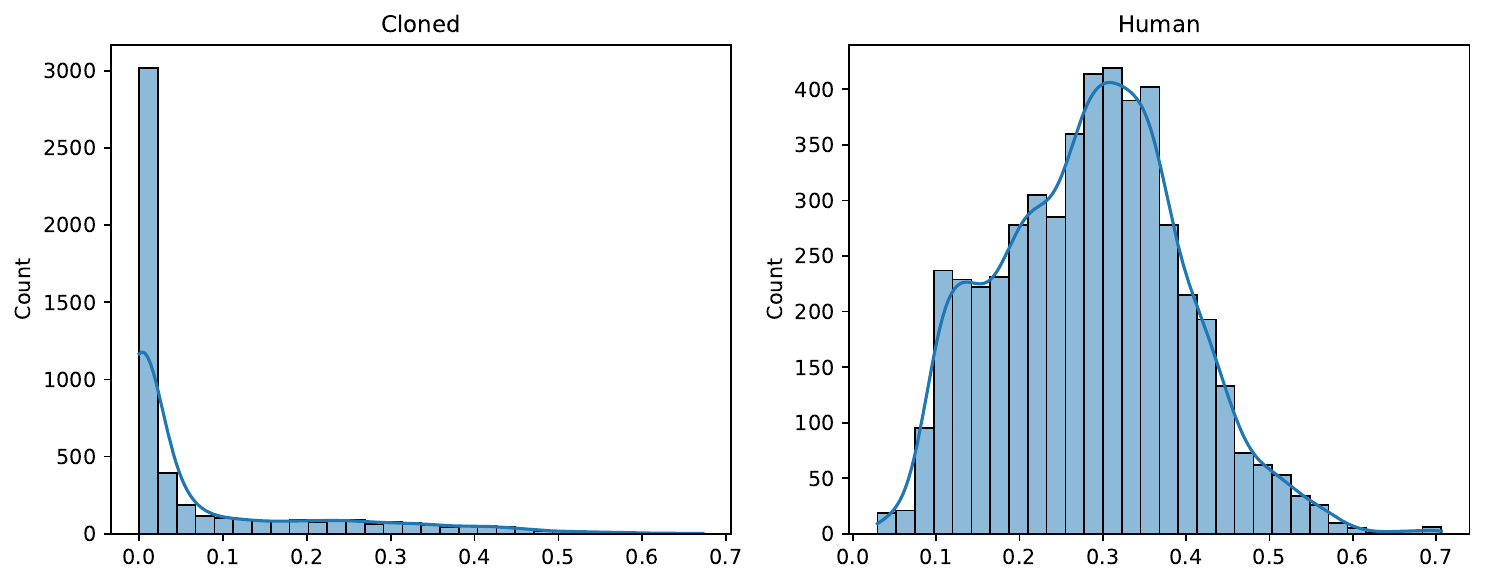}
  \caption{
Distribution of mean distance between query states and retrieved states on Hammer dataset.
}
  \label{fig:hammer_compare}
\end{figure*}

Although Hammer-Cloned also contains mostly low-return behavior, its much larger dataset size provides substantially denser coverage. As a result, \texttt{ICQL} can retrieve states that are much closer to the query state, which enables more reliable local linear approximation and yields slightly higher scores.

Moreover, for both Hammer-Human and Hammer-Cloned, the proportion of high-reward transitions is extremely low, which makes it inherently difficult to retrieve a local neighborhood that provides strong positive supervision. As a result, even if the Q-network successfully fits a local linear approximation, it rarely observes transitions that reliably correspond to high-return behavior. Consequently, the learned Q-values cannot meaningfully distinguish truly rewarding actions, which leads to uniformly low evaluation scores on both datasets.

\subsection{Analysis of the Relationship between Theoretical $d$ and $k$}
In the theory, a local set $\Omega_{s_{query}}^d$ is defined as the set of all transitions whose states fall within a radius-$d$ neighborhood around $s$. This radius determines the intrinsic locality scale at which the Q-function is assumed to be approximately linear. However, in practice, the radius $d$ is not directly tunable: it depends on the underlying density and geometry of the dataset and is unknown to the algorithm.

Instead, \texttt{ICQL} controls locality through the retrieval size $k$. Retrieving the top-$k$ nearest neighbors is equivalent to selecting a data-adaptive radius, where $d_k=\max_{(s_i,\cdot)\in\text{top-}k}||s_i-s||^2_2$ and $\bar{d}_k=\max_{(s'_i,\cdot)\in\text{top-}k}||s'_i-s_i||^2_2$, so that the practical neighborhood is exactly the theoretical local set with radius $(d_k,\bar{d}_k)$. The distribution of the mean distance between query states and retrieved states for different values of $k$ is visualized in~\Cref{fig:k_vs_d}.
\begin{figure*}[htbp]
  \centering

  \begin{subfigure}[t]{0.45\textwidth}
    \centering
    \includegraphics[width=\linewidth]{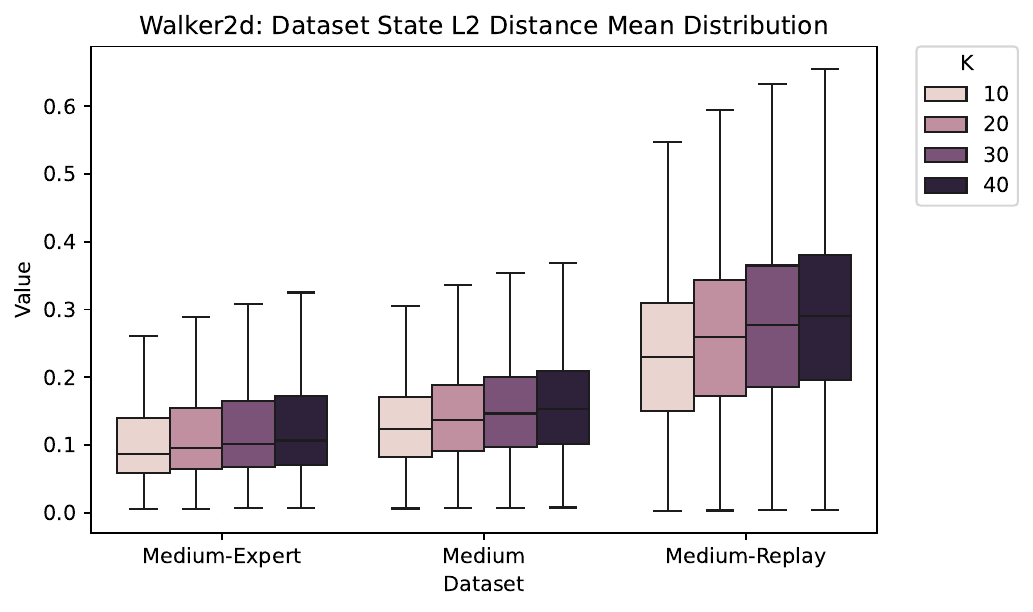}
    \captionsetup{font=scriptsize}
    \caption{\texttt{Medium-Expert}}
    \label{fig:k_vs_d_wk}
  \end{subfigure}
  \begin{subfigure}[t]{0.45\textwidth}
    \centering
    \includegraphics[width=\linewidth]{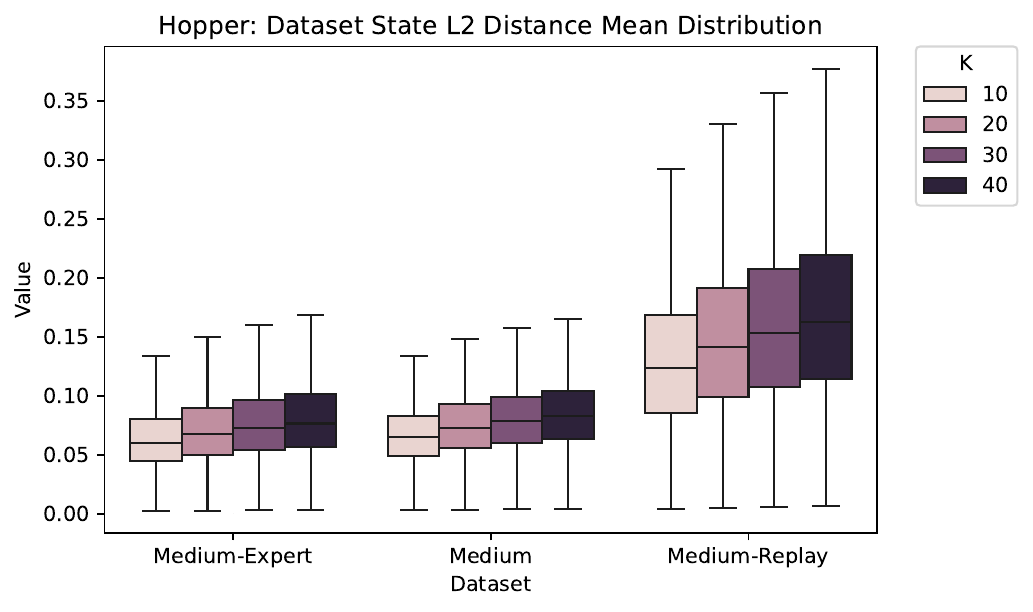}
    \captionsetup{font=scriptsize}
    \caption{\texttt{Medium}}
    \label{fig:k_vs_d_hp}
  \end{subfigure}
  \begin{subfigure}[t]{0.45\textwidth}
    \centering
    \includegraphics[width=\linewidth]{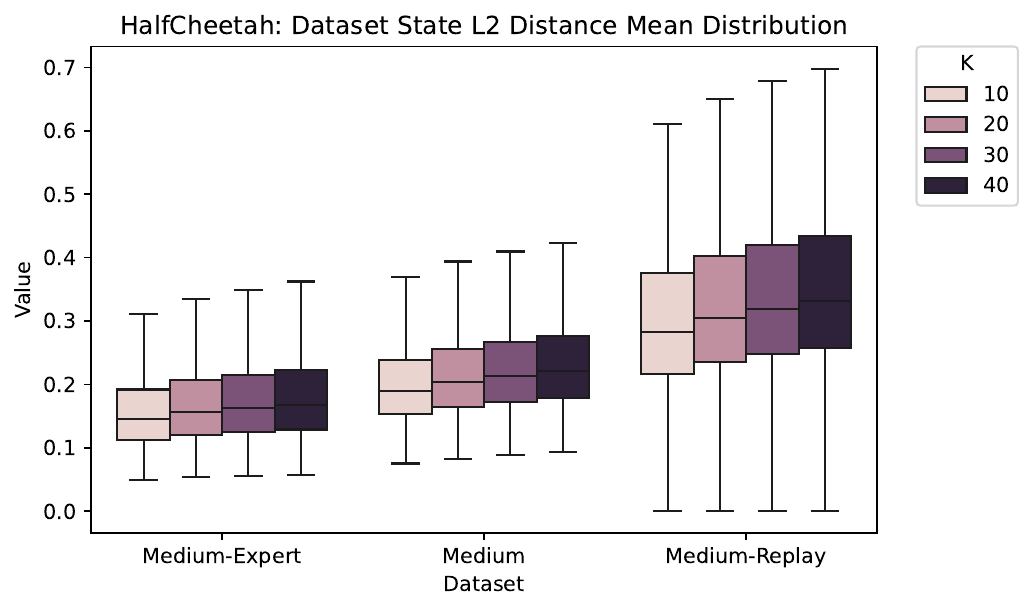}
    \captionsetup{font=scriptsize}
    \caption{\texttt{Medium-Expert}}
    \label{fig:k_vs_d_hc}
  \end{subfigure}
  \begin{subfigure}[t]{0.45\textwidth}
    \centering
    \includegraphics[width=\linewidth]{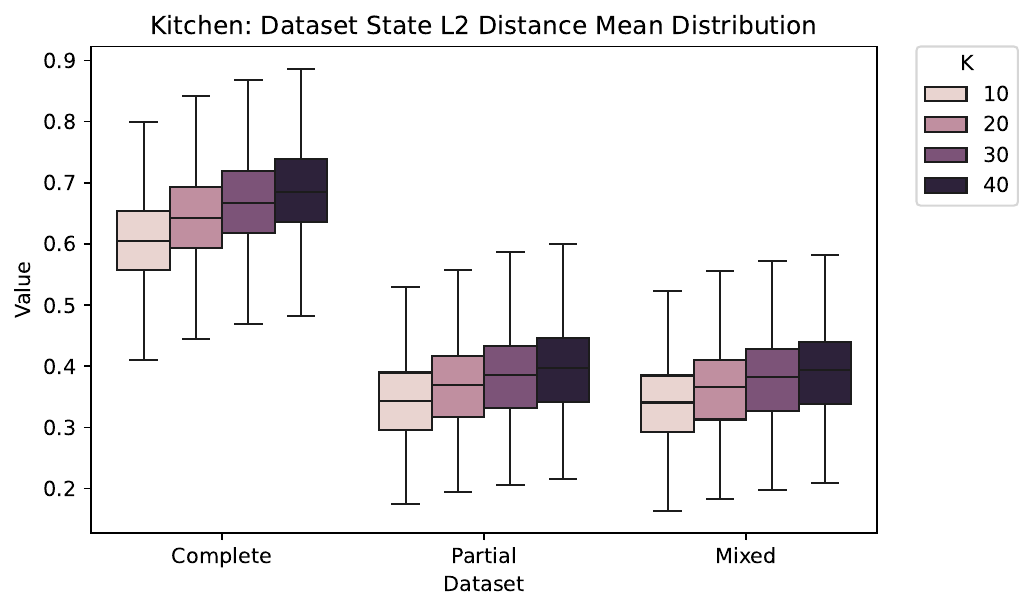}
    \captionsetup{font=scriptsize}
    \caption{\texttt{Medium}}
    \label{fig:k_vs_d_pen}
  \end{subfigure}

  \caption{Distribution of mean distance between query states and retrieved states of different $k$.}
  \label{fig:k_vs_d}
\end{figure*}

Thus, $k$ determines the effective radius implicitly and monotonically: a larger $k$ expands the radius $(d_k,\bar{d}_k)$ and increases the size and heterogeneity of $\Omega^{d_k}_s$, whereas a smaller $k$ leads to tighter neighborhoods with more consistent local value structure.

\subsection{Additional Visualization of Learned Q-value Comparison}
We extend the visualization analysis of learned Q-values for \texttt{ICQL} and \texttt{IQL} by comparing them with Q-values learned by the online RL method \texttt{SAC} on the Walker2d-Medium-Expert, Walker2d-Medium, and Walker2d-Medium-Replay datasets. We scale all Q estimates to the range [0,1] before visualization. We also include additional scatter plots that compare the Q-values estimated by each method against the SAC oracle. The visualizations are shown in~\Cref{fig:q_tsne_wk_all} and~\Cref{fig:q_corr_wk_all}. These plots clearly show that the correlation between \texttt{ICQL} and \texttt{SAC} is stronger than that between \texttt{IQL} and \texttt{SAC}, which indicates that \texttt{ICQL} produces more accurate value estimates than \texttt{IQL}.
\begin{figure*}[htbp]
  \centering
  
  \begin{subfigure}[t]{0.7\textwidth}
    \centering
    \includegraphics[width=\linewidth]{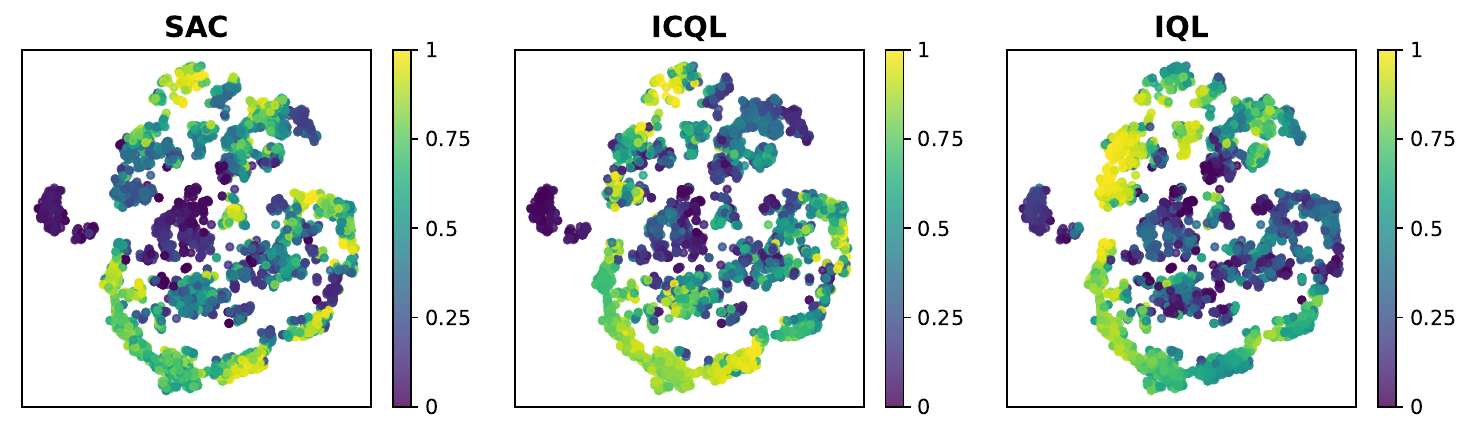}
    \captionsetup{font=scriptsize}
    \caption{\texttt{Medium-Expert}}
    \label{fig:q_tsne_me}
  \end{subfigure}
  \begin{subfigure}[t]{0.7\textwidth}
    \centering
    \includegraphics[width=\linewidth]{pictures/walker2d_tsne_comparison_percentile_walker2d_medium_v2_samples_5000.pdf}
    \captionsetup{font=scriptsize}
    \caption{\texttt{Medium}}
    \label{fig:q_tsne_m}
  \end{subfigure}
  \begin{subfigure}[t]{0.7\textwidth}
    \centering
    \includegraphics[width=\linewidth]{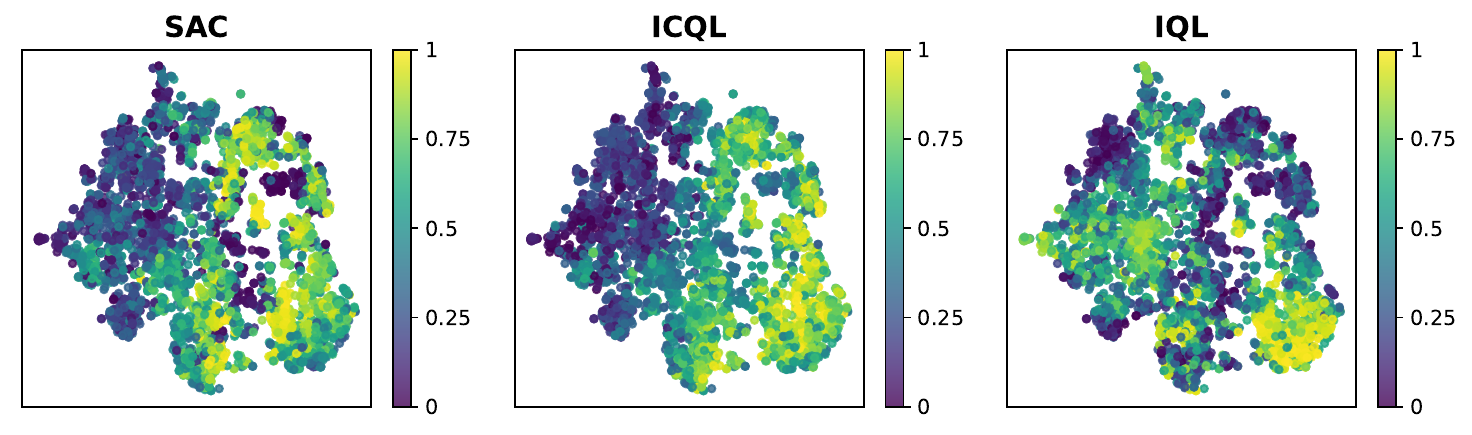}
    \captionsetup{font=scriptsize}
    \caption{\texttt{Medium-Replay}}
    \label{fig:q_tsne_mr}
  \end{subfigure}
  
  \caption{
Q-value of Walker2d-Medium-Expert, Walker2d-Medium, and Walker2d-Medium-Replay dataset on t-SNE mapped state distribution.
}
  \label{fig:q_tsne_wk_all}
\end{figure*}
\begin{figure*}[htbp]
  \centering
  
  \begin{subfigure}[t]{0.7\textwidth}
    \centering
    \includegraphics[width=\linewidth]{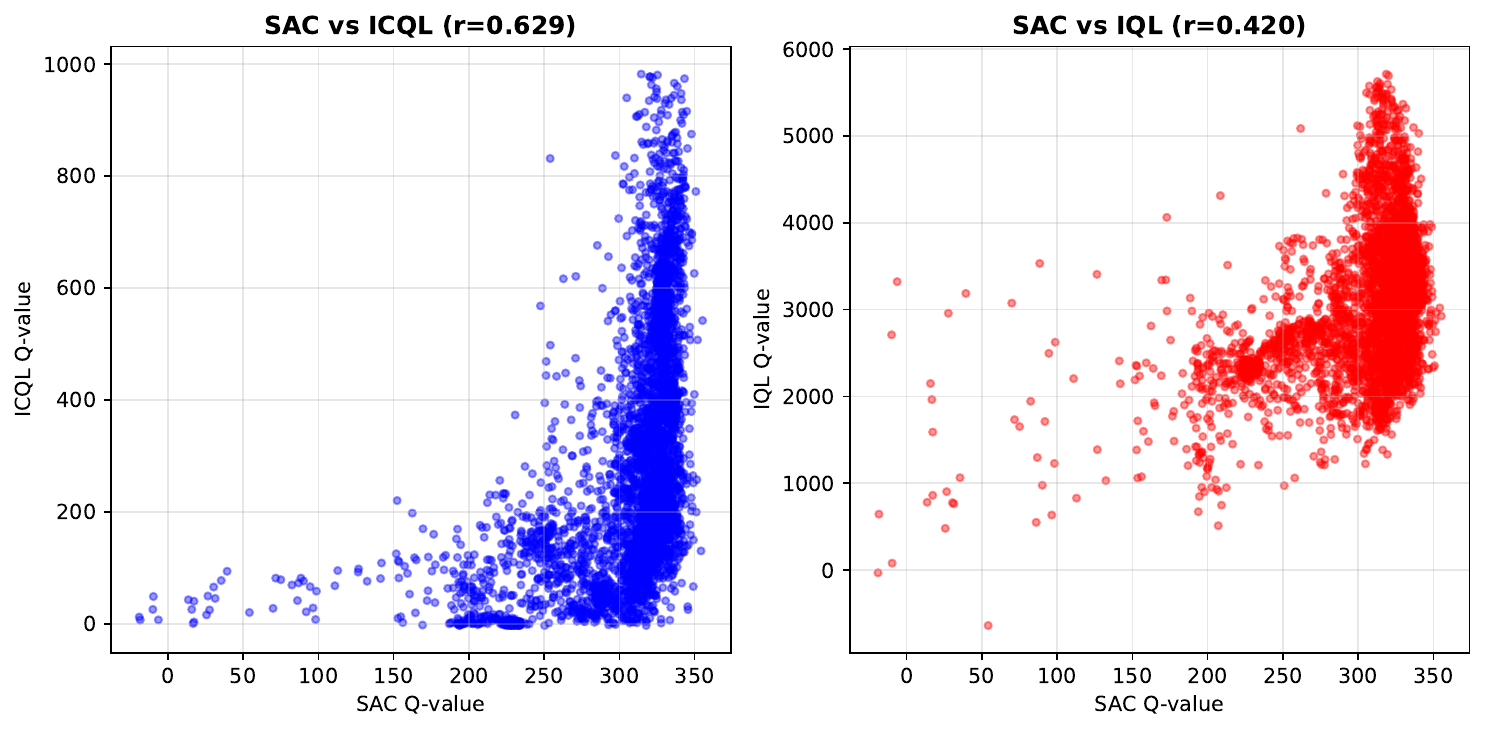}
    \captionsetup{font=scriptsize}
    \caption{\texttt{Medium-Expert}}
    \label{fig:q_corr_me}
  \end{subfigure}
  \begin{subfigure}[t]{0.7\textwidth}
    \centering
    \includegraphics[width=\linewidth]{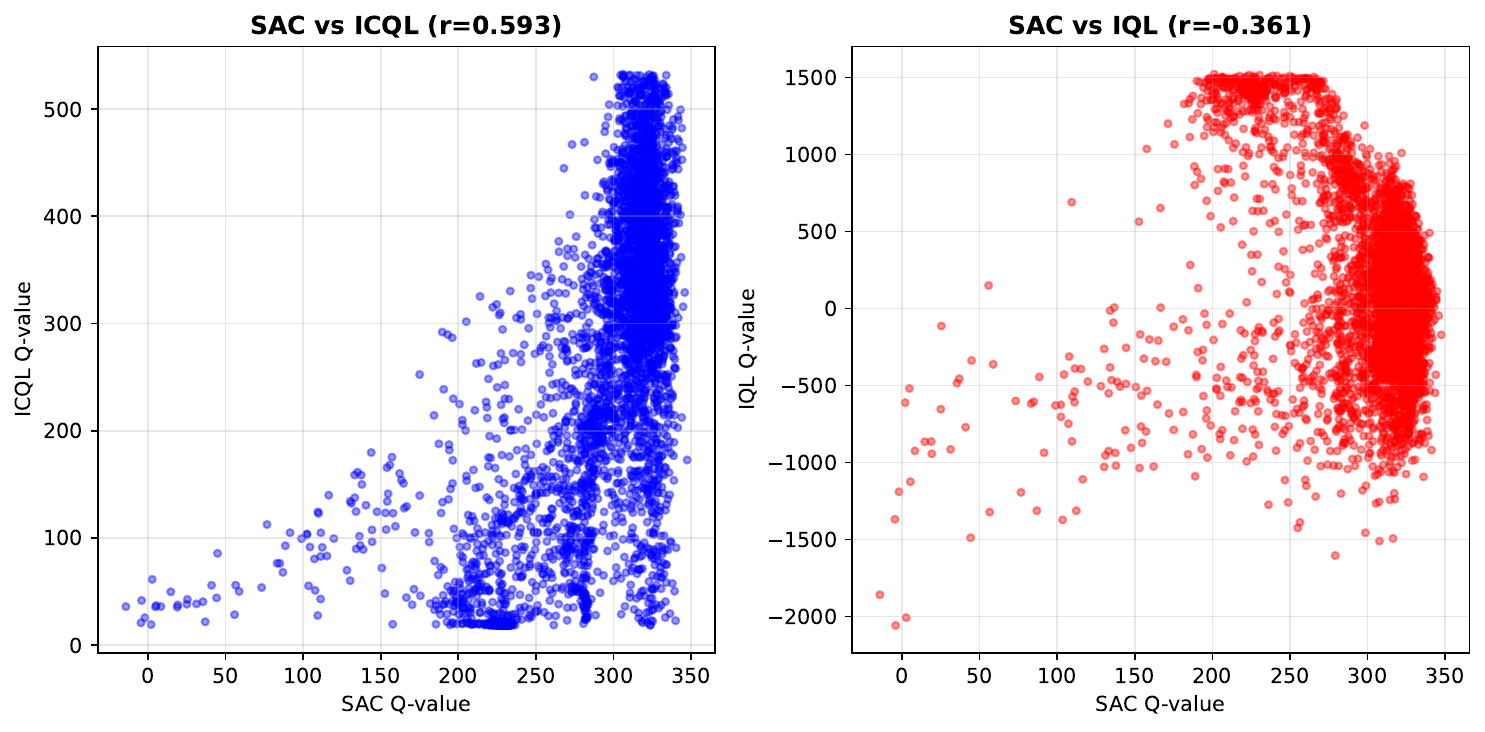}
    \captionsetup{font=scriptsize}
    \caption{\texttt{Medium}}
    \label{fig:q_corr_m}
  \end{subfigure}
  \begin{subfigure}[t]{0.7\textwidth}
    \centering
    \includegraphics[width=\linewidth]{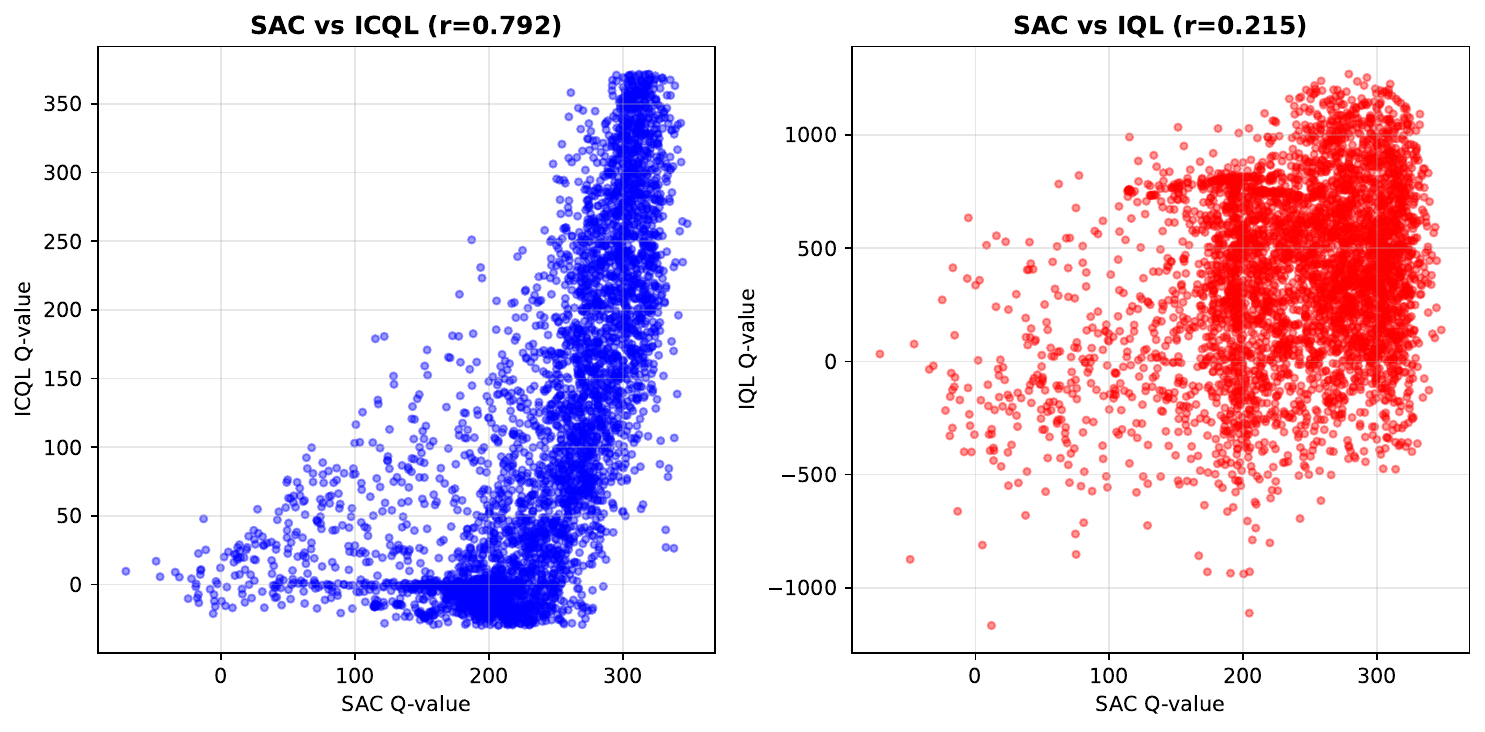}
    \captionsetup{font=scriptsize}
    \caption{\texttt{Medium-Replay}}
    \label{fig:q_corr_mr}
  \end{subfigure}
  
  \caption{
Q-value correlation of Walker2d-Medium-Expert, Walker2d-Medium, and Walker2d-Medium-Replay dataset. Red: Correlation between Q-values learned by \texttt{IQL} and \texttt{SAC}. Blue: Correlation between Q-values learned by \texttt{ICQL} and \texttt{SAC}.
}
  \label{fig:q_corr_wk_all}
\end{figure*}

\clearpage
\subsection{Analysis of In-Context Critics}
In this section, we conduct additional analysis of the functionality of our in-context Q estimator. 
By construction, the forward pass of our in-context Q estimator is equivalent to step-wise optimization of the TD error. We analyze the outputs and parameter distributions of each intermediate layer to validate its effectiveness. We randomly select 10 different states and their corresponding actions from the offline dataset of Walker2d-Medium-Expert-v2, retrieve 20 relevant transitions using cosine state similarity, and estimate the Q-values for these state-action pairs. We store the outputs of all intermediate layers, and the visualization results are shown in~\Cref{fig:q_evolve}. From~\Cref{fig:q_evolve}, we observe that the Q estimates exhibit a converging trend as the layers become deeper, which validates the iterative refinement process.
\begin{figure*}[htbp]
    \centering
    \includegraphics[width=0.5\textwidth]{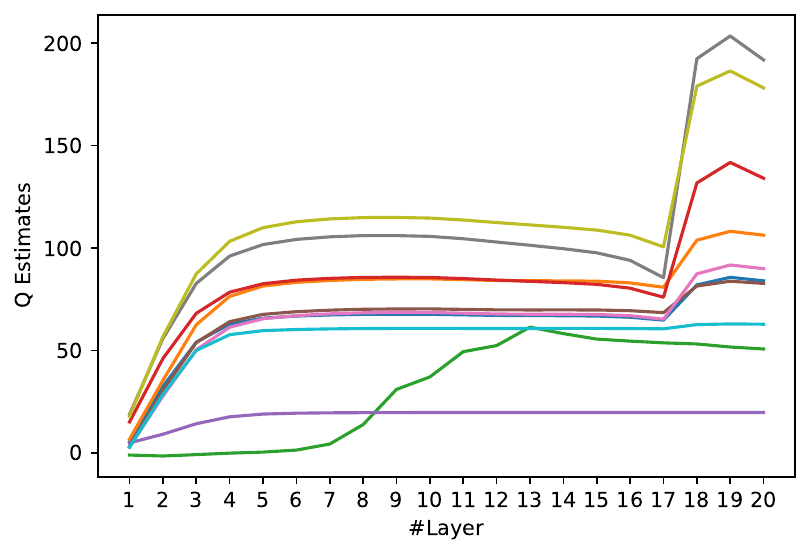}
    \caption{Q-estimates at each intermediate layer.}
    \label{fig:q_evolve}
\end{figure*}

\end{document}